\title{Quantifying the Gain in Weak-to-Strong Generalization}
\author{%
  Moses Charikar \\
  Stanford University \\
  \texttt{moses@cs.stanford.edu}\\
  \And
  Chirag Pabbaraju\\
  Stanford University \\
  \texttt{cpabbara@cs.stanford.edu}\\
  \And
  Kirankumar Shiragur \\
  Microsoft Research \\
  \texttt{kshiragur@microsoft.com}\\
  \texttt{}\\
}
\definecolor{darkgreen}{rgb}{0,0.5,0}
\theoremstyle{plain}
\newtheorem{theorem}{Theorem}
\newtheorem{lemma}[theorem]{Lemma}
\newtheorem{claim}[theorem]{Claim}
\theoremstyle{definition}
\theoremstyle{remark}
\newtheorem*{remark*}{Remark}
\newcommand{\R}{\mathbb{R}}
\newcommand{\E}{\mathbb{E}}
\newcommand{\cC}{\mathcal{C}}
\newcommand{\cH}{\mathcal{H}}
\newcommand{\cF}{\mathcal{F}_s}
\newcommand{\cP}{\mathcal{P}}
\newcommand{\argmin}{\text{argmin}}
\newcommand{\eps}{\varepsilon}
\begin{document}

\maketitle

\begin{abstract}
Recent advances in large language models have shown capabilities that are extraordinary and near-superhuman. These models operate with such complexity that reliably evaluating and aligning them proves challenging for humans. This leads to the natural question: can guidance from weak models (like humans) adequately direct the capabilities of strong models? In a recent and somewhat surprising work, Burns et al.~\cite{burns2023weak} empirically demonstrated that when strong models (like GPT-4) are finetuned using labels generated by weak supervisors (like GPT-2), the strong models outperform their weaker counterparts---a phenomenon they term \emph{weak-to-strong generalization}.%

In this work, we present a theoretical framework for understanding weak-to-strong generalization. 
Specifically, we show that the improvement in performance achieved by strong models over their weaker counterparts is quantified by the \textit{misfit error} incurred by the strong model on labels generated by the weaker model. Our theory reveals several curious algorithmic insights. For instance, we can predict the amount by which the strong model will improve over the weak model, and also choose among different weak models to train the strong model, based on its misfit error.
We validate our theoretical findings through various empirical assessments. %

\end{abstract}

\section{Introduction}
\label{sec:introduction}

Present-day AI models demonstrate incredible capabilities at a variety of extremely difficult tasks. For this reason, they are frequently described as being \textit{superhuman}, in that it seems hard to imagine a human displaying the same abilities as the AI model. For example, move 37 in AlphaGo's famous victory against Go expert Lee Sedol \cite{metz2016two} has been described as being beyond the realm of human imagination. In this sense, today's AI models are well on the path of exhibiting \textit{new} and \textit{emergent} abilities \cite{wei2022emergent}. Ultimately, we want these new abilities to be aligned with what would be beneficial to humanity. This rationale is what primarily guides the training of large-scale AI models through human feedback \cite{christiano2017deep}. However, given that we expect AI models to pick up skills that we ourselves don't fully grasp as humans, how can we enable these highly capable models to realize their potential?

A recent work by \cite{burns2023weak} shows that not all hope is lost in this endeavor. To model humans as being \textit{weak} supervisors for increasingly \textit{strong} AI models, they conduct the following ``weak-to-strong generalization'' experiment. Suppose we finetune a small language model like GPT-2 \cite{gpt2} on data with ground-truth labels for a task. What happens if we then finetune a large language model like GPT-4 \cite{gpt4} on data labeled by GPT-2, instead of data having  ground-truth labels? Would GPT-4 simply overfit to GPT-2's labels and do no better, or would it outperform GPT-2, given that it is inherently a much stronger model? The surprising experimental result is that GPT-4 trained in this manner outperforms GPT-2 when evaluated on the true data, for a variety of finetuning tasks. Note that GPT-4 is able to outperform GPT-2 without ever seeing true labels when it was being finetuned. One plausible explanation for this is that GPT-4 was able to glean the essence of the finetuning task from GPT-2's labels, and since it is fundamentally a stronger model than GPT-2, this knowledge was sufficient for it to outperform GPT-2.\footnote{We note however that this methodology did not allow GPT-4 to fully reach its performance achieved by training on true data.}

In this work, we seek theoretical justification for why we might expect to see such a gain in accuracy in weak-to-strong generalization. Concretely, we ask: 
\begin{quote}
    \textit{Does a weakly supervised strong model provably attain smaller error than its weak supervisor, and if so, can this gain be formally quantified?}
\end{quote}
Towards answering this question, we show (\Cref{thm:realizable}) that in the concrete setting of regression, the true error of a strong model trained on weak labels is smaller than the error of the weak model, by \textit{at least} the error of the strong model on the weak labels itself. We call this latter quantity the \textit{misfit} between the weak and strong model. Our result can be stated as the following simple principle:
\begin{framed}
\vspace{-2mm}
    Gain in accuracy in weak-to-strong generalization $\approx$ Misfit between the weak and strong model
\vspace{-2mm}
\end{framed}
Intuitively, the misfit quantifies the \textit{erroneous knowledge} that the strong model \textit{does not} obtain from the weak model, and hence also the amount that the strong model \textit{improves} over the weak model. We note that the work of \cite{burns2023weak} does empirically show that the performance gain of the strong model scales directly with its misfit (or \textit{disagreement}) with the weak model; our result thereby provides a precise quantification of this observation. 

Key to obtaining our results is a \textit{representation-theoretic perspective} \cite{tripuraneni2020theory} towards weak-to-strong generalization. We posit that the main difference between weak and strong models is in the disparity between the quality of their data representations. This disparity in representation quality can manifest, among other reasons, due to a difference in the expressivity and complexity of the weak and strong models, and the amount of pretraining data that they have seen. %
For example, in the experiments by \cite{burns2023weak}, the weak and strong models used are GPT-2 and GPT-4 respectively; the latter is a significantly larger transformer architecture, pretrained on a much larger dataset than the former. 
As a broader analogy, consider the task of learning a new language. This is an easier task for a multilingual person than a monolingual person.
A multilingual person has a richer representation for language, drawing from their knowledge of different syntax, lexical structures, and sounds in multiple languages.
With this perspective, we can imagine finetuning tasks to be relatively simple functions (e.g., linear functions) composed with the appropriate representation. 
For example, if the task is about learning Italian, a suitable ``Italian-specific'' linear combination of the multilingual's representation of the problem (including features learned from Spanish and French, say) might allow them to better understand the new language, while the same might not work so well for a monolingual whose representation only has features learned from English.

Armed with this perspective, we model the task of learning a real-valued finetuning task under the least squares loss in the weak-to-strong generalization framework. We assume that there exists a ground-truth representation $h^\star$ of the data, which makes it amenable to learn a finetuning task $f^\star$ of interest. 
We imagine that the weak and strong models come equipped with representation maps $h_w$ and $h_s$ respectively, which are possibly obtained via pretraining on a corpus of data. 
Next, we imagine that the weak model sees data labeled by the target function $f^\star \circ h^\star$, and after finetuning, learns some arbitrary function $f_w \circ h_w$. At this point, the weak supervision pipeline begins. The strong model is fed with data labeled by $f_w \circ h_w$ (instead of the true labels $f^\star \circ h^*$), and as part of finetuning, outputs a function $f_{sw}$ from a function class $\cF$, that minimizes the discrepancy between $f_{sw} \circ h_s$ and the data labeled by $f_{w} \circ h_{w}$ that it sees. %
Ultimately, we care about the error of $f_{sw} \circ h_s$ with respect to the \textit{true} finetuning task, namely $f^\star \circ h^\star$. Our main result (\Cref{thm:realizable}) precisely quantifies the gain in the accuracy of $f_{sw}\circ h_s$ over $f_w \circ h_w$ in terms of the misfit between them, under the assumption that the set of functions $\cF$ is a \textit{convex set}.\footnote{Note that the functions in $\cF$ need not be convex themselves.} In many practical applications, the representation map is generally the forward pass of the data through a suitable neural network architecture, and the finetuning task is performed by the \textit{last linear layer} \cite{kumar2022fine} of the network. In such cases, our assumption that the set $\cF$ is convex readily holds true.

We validate our characterization of the gain in weak-to-strong generalization through various experiments (\Cref{sec:experiments}) on synthetic and real-world data. The experiments corroborate our theoretical findings. Namely, we observe that upon performing the above weak-to-strong supervision pipeline, the gain in accuracy of the weakly-supervised strong model over its weak supervisor more or less \textit{exactly} aligns with the misfit between the weak and strong models (\Cref{fig:synthetic-weak-to-strong}). We also demonstrate (\Cref{sec:low-sample-regime}) that the labels ``weak'' and ``strong'' models are nuanced and not solely dependent on expressive power; in fact, in a low-sample regime, a less expressive model produces a higher quality representation and should be considered a strong model. Our theory and experiments lead to several algorithmic insights and open up interesting questions. For example, one algorithmic heuristic that arises from our theory is the following: given access to different weak models, choose to deploy the strong model that achieves the smallest difference between the weak model error and misfit (\Cref{table:lipop-heuristic}). Our results also motivate the perhaps counterintuitive algorithmic question of obtaining weak models that lead to \textit{large misfits} with the strong model.\footnote{In other words, weak models that largely misfit strong models are the ones that most effectively \textit{elicit} \cite{christiano2022eliciting} what the strong models already know.} 
Another possible line of inquiry could look into \textit{ensembling} across different weak models, and obtaining a gain close to the \textit{sum} of their individual misfits. At a more philosophical level, this is akin to a superhuman AI model assimilating knowledge from various humans, while correctly identifying and discarding each of their flaws.

\section{Related Work and Preliminaries}

\subsection{Related Work}
\label{sec:related-work}

The idea of converting a ``weak'' learner to a ``strong'' learner can be traced all the way back to the famous paradigm of \textit{boosting} \cite{freund1995boosting, freund1997decision}, if not earlier. The recent work by \cite{burns2023weak} frames this problem within the context of \textit{superalignment} \cite{superalignment, ji2023ai} which seeks to reliably align AI models smarter than humans to human intent. Thereafter, several works that study the training of a ``strong'' model guided in some capacity by a ``weak'' model have emerged. Some of these include instruction filtering by weak models \cite{li2024superfiltering}, easy-to-hard generalization \cite{sun2024easy}, weak-to-strong correction \cite{ji2024aligner} and weak-to-strong hallucination inducement \cite{zhang2023alleviating}.

The weak-to-strong generalization paradigm is perhaps most closely related to the teacher-student model of training \cite{laine2016temporal, tarvainen2017mean} (sometimes also referred to as knowledge distillation \cite{hinton2015distilling, gou2021knowledge}), where a student model (typically smaller) is trained using data labeled by a teacher model (typically larger), and possibly some additional ground-truth data. The remarkable phenomenon of the student model outperforming the teacher has been observed in many works \cite{bucilua2006model, hinton2015distilling, furlanello2018born}. Most relevant to us are formulations where the student model is \textit{equally} \cite{furlanello2018born} or \textit{more powerful} \cite{xie2020self} than the teacher model. There has been theoretical work explaining superior generalization of the student in some specialized settings, e.g., the work of \cite{mobahi2020self} where the student also has access to ground-truth labels, or the work of \cite{wei2020theoretical}, which operates under a certain \textit{expansion} criterion and \textit{consistency} loss. In contrast, our work does not assume that the student model has any access to ground-truth labels, and also does not incorporate a regularization term in the objective. Finally, the conceptual insight in our work about the performance gain of the student model scaling with its disagreement with the teacher model is closely related to the theory of generalization bounds based on disagreement between different classifiers \cite{dasgupta2001pac, wang2017theoretical, yu2019does}. %

\subsection{Preliminaries}
\label{sec:prelim}
We assume that the data domain is $\R^d$, and assume that there exists a ground truth representation function $h^\star:\R^d \to \R^{d^\star}$ that maps the data $x$ to an enriched representation $h^\star(x)$. We assume the existence of pretraining tasks, through which strong models obtain representations of the data from a function class $\cH_s:\R^{d} \to \R^{d_s}$, and weak models obtain representations from a function class $\cH_w:\R^{d} \to \R^{d_w}$. For example, $\cH_s$ can be the class of deep neural networks, and $\cH_w$ can be the class of shallow neural networks. The target finetuning task (composed with the ground truth representation) is denoted as $f^\star \circ h^\star$, and the function learnt by the weak model is denoted by $f_w \circ h_w$. We assume that the strong model learns finetuning tasks from a function class $\cF:\R^{d_s} \to \R$, and assume that the set $\cF$ is a \textit{convex} set. The convexity assumption requires that, for any $f, g \in \cF$, and for any $\lambda \in [0,1]$, there exists $h \in \cF$ such that for all $z \in \R^{d_s}$, $h(z) = \lambda f(z) + (1-\lambda) g(z)$. For example, $\cF$ can be the class of all linear functions from $\R^{d_s}$ to $\R$. However, we do not assume anything about either $f^\star$ or $f_w$; in particular, they need not belong to $\cF$. %
We denote the marginal data distribution by $\mathcal{P}$. For any two functions $f,g: \R^d \to \R$, we define the distance $d_\cP(f,g)= \E_{x \sim \mathcal{P}}(f(x)-g(x))^2$, i.e., it is the average (with respect to $\cP$) squared distance between the images of the functions.

\section{Results}
\label{sec:results}

We first state a quantitative version of our main result that characterizes the gain in weak-to-strong generalization in terms of strong-to-weak misfit in the so-called \textit{realizable} setting. Namely, we assume that the target finetuning task $f^\star \circ h^\star$ can be equivalently written as $f_s \circ h_s$ for some $f_s \in \cF$.

\begin{theorem}[Weak-to-Strong Generalization under Realizability]
    \label{thm:realizable}
    Let $h^\star: \R^d \to \R^{d^\star}$ be a ground truth representation map, and let $f^\star:\R^{d^\star}\to \R$ be a finetuning task of interest. Let $h_s:\R^d \to \R^{d_s}$ and $h_w:\R^d \to \R^{d_w}$ be the strong and weak model representation maps respectively. Given some data labeled by $f^\star \circ h^\star$, let $f_w \circ h_w$ be the function learnt by the weak model, for some arbitrary function $f_w:\R^{d_w}\to\R$. %
    Now, for a convex set of functions $\cF$ mapping $\R^{d_s}$ to $\R$ let
    \begin{align}
        \label{eqn:fsw-population-minimizer}
        f_{sw} = \argmin_{f \in \cF}\; d_\cP(f \circ h_s, f_w \circ h_w)
    \end{align}
    be the function learnt by the strong model under weak supervision. Lastly, let us assume that there exists $f_s \in \cF$ such that $f_s \circ h_s = f^\star \circ h^\star$. Then, we have that
    \begin{align}
        \label{eqn:realizable}
        d_\cP(f_{sw} \circ h_s, f^\star \circ h^\star) &\le d_\cP(f_w \circ h_w, f^\star \circ h^\star) - d_\cP(f_{sw} \circ h_s, f_w \circ h_w).
    \end{align}
\end{theorem}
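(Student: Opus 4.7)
The plan is to recognize that this is essentially a statement about projection onto a convex set in the Hilbert space $L^2(\cP)$. All functions in play ($f^\star \circ h^\star$, $f_w \circ h_w$, and $f \circ h_s$ for $f \in \cF$) live in $L^2(\cP)$, equipped with inner product $\langle u, v\rangle = \E_{x \sim \cP}[u(x) v(x)]$, so that $d_\cP(u,v) = \|u - v\|^2$. The crucial observation is that the set $\cC := \{f \circ h_s : f \in \cF\}$ is a convex subset of $L^2(\cP)$, inheriting convexity from $\cF$ (if $f, g \in \cF$ and $\lambda \in [0,1]$, then $\lambda f + (1-\lambda) g \in \cF$, and its composition with $h_s$ equals $\lambda (f\circ h_s) + (1-\lambda)(g \circ h_s)$).

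Next I would interpret the two relevant functions geometrically: by definition, $f_{sw}\circ h_s$ is the $L^2$-closest element of $\cC$ to $f_w \circ h_w$, i.e.\ the convex projection of $f_w \circ h_w$ onto $\cC$. And by the realizability assumption, $f^\star \circ h^\star = f_s \circ h_s$ is itself a point of $\cC$. So the statement to be proved is a clean inequality between three quantities: the squared distance from $f_w \circ h_w$ to a specific point of $\cC$ (namely $f^\star \circ h^\star$), the squared distance from $f_w \circ h_w$ to its projection onto $\cC$, and the squared distance from that projection to the specific point.

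The standard tool to close this out is the first-order optimality condition (variational inequality) for projection onto a convex set: since $f_{sw}$ minimizes $d_\cP(f \circ h_s, f_w \circ h_w)$ over the convex set $\cF$, for any $g \in \cC$ — in particular for $g = f^\star \circ h^\star$ — we have
\begin{equation*}
\langle f_w \circ h_w - f_{sw} \circ h_s,\; g - f_{sw} \circ h_s \rangle \le 0.
\end{equation*}
Applying this with $g = f^\star \circ h^\star$, together with the algebraic identity $\|a-c\|^2 = \|a-b\|^2 + \|b-c\|^2 + 2\langle a-b, b-c\rangle$ taken with $a = f_w \circ h_w$, $b = f_{sw} \circ h_s$, $c = f^\star \circ h^\star$, directly gives
\begin{equation*}
\|f_w \circ h_w - f^\star \circ h^\star\|^2 \ge \|f_w \circ h_w - f_{sw} \circ h_s\|^2 + \|f_{sw} \circ h_s - f^\star \circ h^\star\|^2,
\end{equation*}
which rearranges to the desired inequality \eqref{eqn:realizable}. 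The main obstacle, such as it is, is justifying the variational inequality rigorously: $\cF$ is only assumed convex (not closed), so rather than quoting a projection theorem I would derive the inequality directly by considering the one-parameter family $f_\lambda = (1-\lambda) f_{sw} + \lambda f_s \in \cF$ for $\lambda \in [0,1]$, using optimality of $\lambda = 0$ for the scalar function $\lambda \mapsto d_\cP(f_\lambda \circ h_s, f_w \circ h_w)$, and taking the right derivative at zero. Everything else is routine Hilbert-space algebra.
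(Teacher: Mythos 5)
Your proposal is correct and follows essentially the same route as the paper's proof: the paper likewise establishes convexity of $V_s=\{f\circ h_s : f\in\cF\}$, views $f_{sw}\circ h_s$ as the projection of $f_w\circ h_w$ onto this convex set, derives the variational (cross-term) inequality by moving along the segment $f_{sw}\circ h_s + t\,(f^\star\circ h^\star - f_{sw}\circ h_s)$ and letting $t\downarrow 0$, and then concludes via the same squared-distance expansion with $g=f^\star\circ h^\star$. Your attention to justifying the first-order optimality condition directly (rather than quoting a projection theorem for closed convex sets) matches exactly what the paper does.
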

On the left-hand side in \eqref{eqn:realizable} is the error of the weakly-supervised strong model on the true data. The first term on the right-hand side is the true error of the weak model, and the second term is the error of the weakly-supervised strong model on data labeled by the weak model (i.e., misfit). Thus, the inequality directly says that the weakly-supervised strong model improves over the weak model by (at least) an amount equal to the misfit. Note again that in practice, a popular way to finetune a pretrained model on task-specific data is by tuning the weights of only the last linear layer of the model. In these cases, $\cF$ is simply the set of linear functions, which is convex. We emphasize that neither of $f^\star$ or $f_w$ need to belong to $\cF$; as long as the strong model finds the \textit{minimizer} over a convex set of the loss on the weakly labeled data (as in \eqref{eqn:fsw-population-minimizer}), the inequality in \eqref{eqn:realizable} holds.

Next, we relax the realizability assumption that the target task $f^\star \circ h^\star$ belongs to the space of functions that the strong model optimizes over. 
Instead, suppose that by composing $h_s$ with functions in $\cF$, it is possible for the strong model to get a small distance $\eps$ to the target task.
The strong model could obtain such a powerful representation map
after having seen an abundance of pretraining data; the realizable case corresponds to $\eps = 0$.
We also relax the assumption that the strong model is able to obtain the true minimizer with respect to the data distribution $\cP$ as in \eqref{eqn:fsw-population-minimizer}. In reality, we can imagine that the strong model only sees a finite sample labeled by the weak model, and obtains $\hat{f}_{sw}$ by minimizing the loss over this finite sample. Even with these relaxations, we can show that the same qualitative result as in \Cref{thm:realizable} continues to hold, upto small error terms.

\begin{theorem}[Weak-to-Strong Generalization under Non-Realizability and Finite Samples]
    \label{thm:non-realizable-finite-samples}
    Let $h^\star: \R^d \to \R^{d^\star}$ be a ground truth representation map, and let $f^\star:\R^{d^\star}\to\R$ be a finetuning task of interest. Let $h_s:\R^d \to \R^{d_s}$ and $h_w:\R^d \to \R^{d_w}$ be the strong and weak model representations respectively. Given some data labeled by $f^\star \circ h^\star$, let $f_w \circ h_w$ be the function learnt by the weak model, for some arbitrary function $f_w:\R^{d_w}\to\R$. %
    For a convex set of functions $\cF$ mapping $\R^{d_s} \to \R$, let
    \begin{align}
        \label{eqn:eps-realizability}
        f_{s} = \argmin_{f \in \cF}\; d_\cP(f \circ h_s, f^\star \circ h^\star),
    \end{align}
    and suppose that $d_\cP(f_s \circ h_s, f^\star \circ h^\star) = \eps$. 
    Now, suppose we obtain $n$ weakly-labeled i.i.d. samples $({x}_1, {y}_1),\dots,({x}_n, {y}_n)$, where each ${x}_i \sim \cP$ and ${y}_i = {f}_w \circ h_w ({x}_i)$. Let
    \begin{align}
        \label{eqn:erm}
        \hat{f}_{sw} = \argmin_{f \in \cF}\; \frac{1}{n} \sum_{i=1}^n (f\circ h_s({x}_i)-{y}_i)^2.
    \end{align}
    Finally, assume that the range of $f^\star, f_w$ and all the functions in $\cF$ is absolutely bounded. Then, we have that with probability at least $1-\delta$ over the draw of $({x}_1, {y}_1),\dots,({x}_n, {y}_n)$,
    \begin{align}
        \label{eqn:non-realizable-finite-samples}
        d_\cP(\hat{f}_{sw} \circ h_s, f^\star \circ h^\star) &\le d_\cP({f}_w \circ h_w, f^\star \circ h^\star) - d_\cP(\hat{f}_{sw} \circ h_s, {f}_w \circ h_w) \nonumber \\
        &\qquad+ O(\sqrt{\eps}) +  O\left(\frac{\cC_{\cF}}{n}\right)^{\frac14} +  O\left(\frac{\log(1/\delta)}{n}\right)^{\frac14},
    \end{align}
    where $\cC_{\cF}$ is a constant capturing the complexity of the function class $\cF$, and the asymptotic notation is with respect to $\eps \to 0, n \to \infty$.
\end{theorem}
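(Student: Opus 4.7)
The plan is to reduce \Cref{thm:non-realizable-finite-samples} to the clean population-minimizer inequality already established in \Cref{thm:realizable}, incurring small additive corrections for the $\eps$-non-realizability and the empirical-to-population sampling gap. Throughout I would abbreviate $A := \hat{f}_{sw} \circ h_s$, $M := f_{sw} \circ h_s$, $B := f_w \circ h_w$, $S := f_s \circ h_s$, and $T := f^\star \circ h^\star$, and use repeatedly that $\sqrt{d_\cP(\cdot,\cdot)}$ is an $L^2(\cP)$ (pseudo)metric, together with the boundedness hypothesis which makes every relevant distance $O(1)$.

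The core engine is the variational inequality for the $L^2(\cP)$-projection onto the convex set $\cF \circ h_s := \{f \circ h_s : f \in \cF\}$, which already underlies the proof of \Cref{thm:realizable}: for every $y \in \cF \circ h_s$, the population minimizer satisfies $d_\cP(B, y) \ge d_\cP(B, M) + d_\cP(M, y)$. I would instantiate this once with $y = S$ to obtain $d_\cP(M, S) \le d_\cP(B, S) - d_\cP(B, M)$, and a second time with $y = A \in \cF \circ h_s$ to obtain the ``excess-risk-to-distance'' bound $d_\cP(M, A) \le d_\cP(B, A) - d_\cP(B, M)$. To pass from $S$ to $T$, I would apply the $L^2$ triangle inequality using $d_\cP(S, T) = \eps$: together with bounded ranges this yields $d_\cP(M, T) \le d_\cP(M, S) + O(\sqrt{\eps})$ and $d_\cP(B, S) \le d_\cP(B, T) + O(\sqrt{\eps})$, so chaining with the first projection inequality produces the intermediate bound $d_\cP(M, T) \le d_\cP(B, T) - d_\cP(B, M) + O(\sqrt{\eps})$.

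For the sampling step, I would invoke a standard uniform convergence argument on the bounded squared-loss class $\{x \mapsto (f \circ h_s(x) - B(x))^2 : f \in \cF\}$ to obtain, with probability at least $1 - \delta$,
\[\gamma \,:=\, \sup_{f \in \cF} \biggl| \frac{1}{n} \sum_{i=1}^n (f \circ h_s(x_i) - y_i)^2 - d_\cP(f \circ h_s, B) \biggr| \,\le\, O\bigl(\sqrt{\cC_\cF / n}\bigr) + O\bigl(\sqrt{\log(1/\delta)/n}\bigr),\]
with $\cC_\cF$ the (e.g.\ Rademacher) complexity of the loss class. The usual sandwich argument, comparing $A$ and $M$ through both the empirical and the population losses, then yields the excess-risk bound $d_\cP(A, B) \le d_\cP(M, B) + 2\gamma$, and feeding this into the second projection inequality gives the distance bound $d_\cP(A, M) \le 2\gamma$. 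A final $L^2$ triangle inequality upgrades this to $d_\cP(A, T) \le d_\cP(M, T) + O(\sqrt{\gamma})$, while the excess-risk bound also gives $-d_\cP(B, M) \le -d_\cP(A, B) + 2\gamma$; substituting both into the intermediate $M$-bound yields exactly \eqref{eqn:non-realizable-finite-samples}, since $\sqrt{\gamma} = O((\cC_\cF/n)^{1/4}) + O((\log(1/\delta)/n)^{1/4})$.

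The step I expect to be the main obstacle --- and which is exactly responsible for the degradation from the $n^{-1/2}$ uniform-convergence rate to the theorem's $n^{-1/4}$ rate --- is converting a squared-quantity excess-risk bound of order $\gamma$ into a first-order correction on $d_\cP(A, T)$ via the triangle cross-term $\sqrt{d_\cP(A, M) \cdot d_\cP(M, T)}$. The clean way around this is precisely to apply the projection inequality a second time with $y = A$, which turns an excess-risk bound directly into a squared-distance bound from the population minimizer; without the convexity of $\cF$ this step would fail, and one would only obtain much weaker sample-complexity conclusions.
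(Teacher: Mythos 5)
Your proposal is correct and follows essentially the same route as the paper's proof: the Pythagorean/projection inequality from the realizable theorem applied twice (once against $f_s \circ h_s$, once against $\hat{f}_{sw} \circ h_s$), square-root triangle inequalities with boundedness to absorb the $\eps$-non-realizability, and a uniform-convergence sandwich bound whose cross terms produce the $n^{-1/4}$ rate. The only differences are cosmetic (order of substitutions and notation), so no further comparison is needed.
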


As compared to \eqref{eqn:realizable}, the bound in \eqref{eqn:non-realizable-finite-samples} has two sources of error terms: the first error term of $O(\sqrt{\eps})$ arises (via standard triangle inequality arguments) due to the non-realizability assumption, and goes to zero as the strong model becomes stronger and more expressive. The latter two error terms arise (via standard uniform convergence arguments as in \cite{tripuraneni2020theory}) because the strong model only sees a finite weakly-labeled sample---these terms vanish too as the sample size becomes large.

\section{Main Proof Technique}
\label{sec:techniques}

In this section, we outline the proof of realizable weak-to-strong generalization (\Cref{thm:realizable}). The proof of \Cref{thm:non-realizable-finite-samples} uses the same main idea and is given in \Cref{sec:non-realizable-proofs}. Recall that the strong model learns from a convex set $\cF:\R^{d_s} \to \R$ of finetuning tasks. Recall also that we denote the strong model representation map by $h_s:\R^d \to \R^{d_s}$. Let $V_s = \{f \circ h_s: f \in \cF\}$ be the set of all tasks in $\cF$ composed with the strong model representation. We first observe that $V_s$ is also a convex set.
\begin{claim}
    \label{claim:Vs-convex}
    $V_s$ is a convex set.
\end{claim}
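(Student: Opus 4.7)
The plan is to verify convexity of $V_s$ directly from the definition, leveraging the convexity of $\cF$ stated in the preliminaries. Pick any two elements $u, v \in V_s$ and any $\lambda \in [0,1]$; I need to exhibit a member of $V_s$ equal to the pointwise combination $\lambda u + (1-\lambda) v$. By definition of $V_s$, I can write $u = f \circ h_s$ and $v = g \circ h_s$ for some $f, g \in \cF$.

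Next, I apply the convexity hypothesis on $\cF$: there exists $h \in \cF$ with $h(z) = \lambda f(z) + (1-\lambda) g(z)$ for every $z \in \R^{d_s}$. Composing with $h_s$ on the right and specializing to $z = h_s(x)$, I get for every $x \in \R^d$
\begin{equation*}
(h \circ h_s)(x) = \lambda f(h_s(x)) + (1-\lambda) g(h_s(x)) = \lambda u(x) + (1-\lambda) v(x).
\end{equation*}
Since $h \in \cF$, the function $h \circ h_s$ lies in $V_s$, witnessing that $\lambda u + (1-\lambda) v \in V_s$. This completes the verification. There is no real obstacle; the claim is essentially an unpacking of definitions, and the only thing to be careful about is that the convexity assumption on $\cF$ is phrased pointwise (as in the preliminaries), which is exactly what is needed to push the combination through the composition with $h_s$.
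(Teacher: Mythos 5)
Your proof is correct and follows essentially the same route as the paper: write the two elements of $V_s$ as $f \circ h_s$ and $g \circ h_s$, invoke the pointwise convexity of $\cF$ to get a function (the paper calls it $p$, you call it $h$) realizing the combination, and evaluate at $h_s(x)$ to conclude $\lambda u + (1-\lambda)v = h \circ h_s \in V_s$. No gaps.
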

\begin{proof}
    Fix $f, g \in \cF$, and consider $f \circ h_s, g \circ h_s \in V_s$. Fix any $\lambda \in [0,1]$. Since $\cF$ is a convex set, there exists $p \in \cF$ such that for all $y \in \R^{d_s}$, $p(y) = \lambda f(y) + (1-\lambda)g(y)$. Now, fix any $x \in \R^d$. Then, we have that
    \begin{align*}
        \lambda (f \circ h_s)(x) + (1-\lambda)(g \circ h_s)(x) &= \lambda f(h_s(x)) + (1-\lambda)g(h_s(x))
        = p(h_s(x)) = (p \circ h_s)(x),
    \end{align*}
    and hence $\lambda (f \circ h_s) + (1-\lambda)(g \circ h_s) = p \circ h_s \in V_s$.
\end{proof}
We are then ready to prove \Cref{thm:realizable}.
\begin{proof}[Proof of \Cref{thm:realizable}]
    The setting under consideration is depicted in \Cref{fig:realizable-projection}. Since we assume realizability, $f^\star \circ h^\star \in V_s$. Let $A = d_\cP(f_{sw} \circ h_s, f^\star \circ h^\star)$, $B=d_\cP(f_{sw} \circ h_s, f_w \circ h_w)$ and $C = d_\cP(f_w \circ h_w, f^\star \circ h^\star)$. We want to show that $C \ge A + B$. Recall that $$f_{sw} = \argmin_{f \in \cF}\; d_\cP(f \circ h_s, f_w \circ h_w).$$
    In other words, $f_{sw} \circ h_s$ is the \textit{projection} of $f_w \circ h_w$ onto the convex set $V_s$. We can therefore apply the ``Pythagorean theorem'' for projections onto a convex set \cite[Theorem 2.1]{hazan2016introduction}.

    \begin{figure}[H]
        \centering
        \includegraphics[scale=0.4]{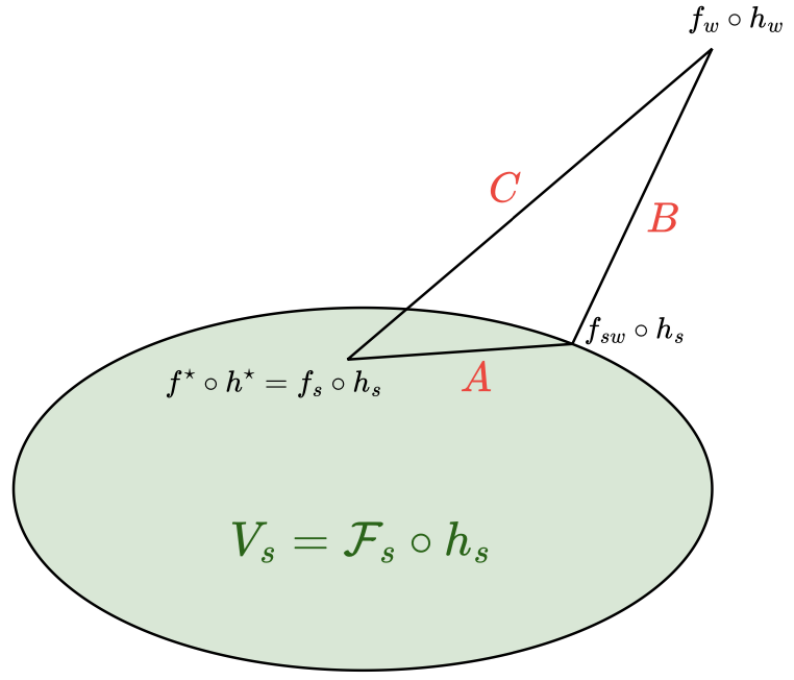}
        \caption{$f_{sw} \circ h_s$ is the projection of $f_w \circ h_w$ onto the convex set $V_s$.}
        \label{fig:realizable-projection}
    \end{figure}

    Concretely, for any $g \in V_s$, observe that
    \begin{align}
        d_\cP(f_w \circ h_w, g) &= \E_{x \sim \cP}(g(x) - (f_w \circ h_w)(x))^2 \nonumber \\
        &\hspace{-1.5cm}= \E_{x \sim \cP}(g(x) - (f_{sw} \circ h_s)(x) + (f_{sw} \circ h_s)(x) - (f_w \circ h_w)(x))^2 \nonumber \\
        &\hspace{-1.5cm}= \E_{x \sim \cP}(g(x) - (f_{sw} \circ h_s)(x))^2 + \E_{x \sim \cP}((f_{sw} \circ h_s)(x) - (f_w \circ h_w)(x))^2 \nonumber \\
        &\hspace{-0.5cm}+2 \cdot \E_{x \sim \cP}\left[(g(x) - (f_{sw} \circ h_s)(x))((f_{sw} \circ h_s)(x) - (f_w \circ h_w)(x))\right] \nonumber \\
        &\hspace{-1.5cm}=d_\cP(f_{sw} \circ h_s, g) + d_\cP(f_{sw} \circ h_s, f_w \circ h_w) \nonumber \\
        &\hspace{-0.5cm}+2 \cdot \E_{x \sim \cP}\left[(g(x) - (f_{sw} \circ h_s)(x))((f_{sw} \circ h_s)(x) - (f_w \circ h_w)(x))\right]. \label{eqn:square-expansion}
    \end{align}
    But note also that by definition of projection, $d_\cP(f_w \circ h_w, g) \ge d_\cP(f_{sw} \circ h_s, f_w \circ h_w)$, and hence
    \begin{align}
        &d_\cP(f_{sw} \circ h_s, g) + 2 \cdot \E_{x \sim \cP}\left[(g(x) - (f_{sw} \circ h_s)(x))((f_{sw} \circ h_s)(x) - (f_w \circ h_w)(x))\right] \ge 0. \label{eqn:cross-term-inequality-1}
    \end{align}
    
    Now, fix $t \in (0,1)$, and consider the function 
    \begin{align*}
    w(t) = f_{sw} \circ h_s + t \cdot (f^\star \circ h^\star - f_{sw} \circ h_s).
    \end{align*}
    Namely, for any $x$, $w(t)(x) = (f_{sw} \circ h_s)(x) + t \cdot ((f^\star \circ h^\star)(x) - (f_{sw} \circ h_s)(x))$. Because $V_s$ is a convex set (\Cref{claim:Vs-convex}), $w(t) \in V_s$. Also,
    \begin{align*}
        d_{\cP}(f_{sw} \circ h_s, w(t)) &= \E_{x \sim \cP}((f_{sw} \circ h_s)(x)-w(t)(x))^2 \\
        &= t^2 \cdot \E_{x \sim \cP}((f^\star \circ h^\star)(x) - (f_{sw} \circ h_s)(x))^2.
    \end{align*}
    Hence, substituting $w(t)$ for $g$ in \eqref{eqn:cross-term-inequality-1}, we get
    \begin{align*}
        &t^2 \cdot \E_{x \sim \cP}((f^\star \circ h^\star)(x) - (f_{sw} \circ h_s)(x))^2 \\
        &+ 2t \cdot \E_{x \sim \cP}\left[((f^\star \circ h^\star)(x) - (f_{sw} \circ h_s)(x))((f_{sw} \circ h_s)(x) - (f_w \circ h_w)(x))\right] \ge 0.
    \end{align*}
    Taking the limit as $t \downarrow 0$, we get that
    \begin{align}
        \E_{x \sim \cP}\left[((f^\star \circ h^\star)(x) - (f_{sw} \circ h_s)(x))((f_{sw} \circ h_s)(x) - (f_w \circ h_w)(x))\right] \ge 0 \label{eqn:cross-term-inequality-2}
    \end{align}
    Substituting $f^\star \circ h^\star$ for $g$ in \eqref{eqn:square-expansion}, and using \eqref{eqn:cross-term-inequality-2}, we obtain the desired result
    \begin{align*}
        d_\cP(f_w \circ h_w, f^\star \circ h^\star) \ge d_\cP(f_{sw} \circ h_s, f^\star \circ h^\star) + d_\cP(f_{sw} \circ h_s, f_w \circ h_w).
    \end{align*}
\end{proof}

\section{Experiments}
\label{sec:experiments}

    We perform experiments\footnote{\url{https://github.com/chogba/wtsg-regression}} on synthetically generated data as well as real-world molecular prediction and natural language datasets to verify the guarantees on weak-to-strong generalization given by our theorems. The results for the natural language tasks are given in \Cref{sec:nlp-expts}.
    \subsection{Synthetic Experiments}
    \label{sec:synthetic-expts}
    
    \begin{figure}[t]
	\begin{subfigure}{.33\textwidth}
	\centering
	\includegraphics[width=1\linewidth]{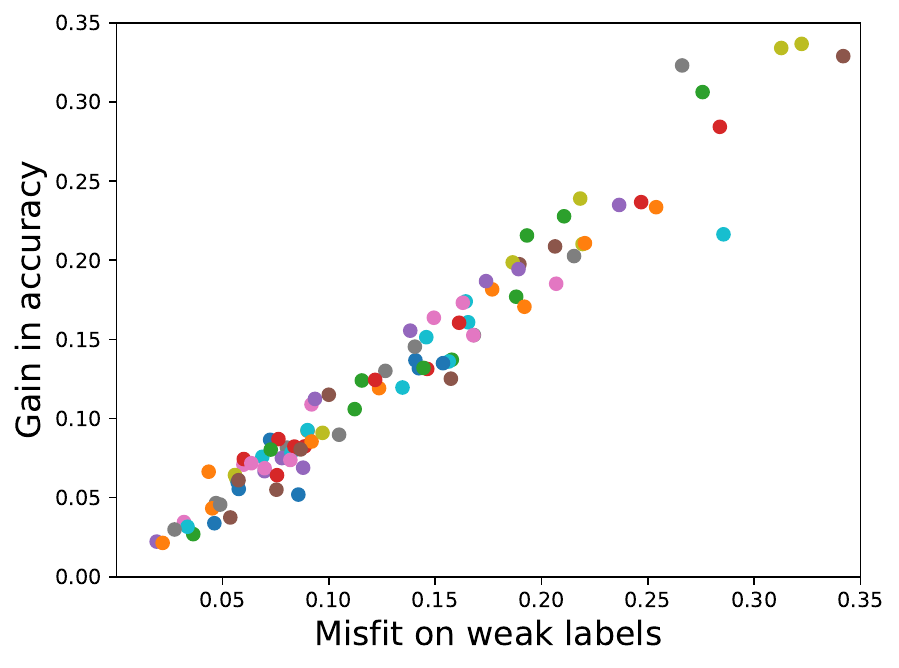}  
	\caption{Realizable (pretraining). \\
        $h^\star$ is a 5-layer MLP, $h_s = h_\star$, \\  $h_w$ is a 2-layer MLP.}
	\label{fig:realizable-weak-to-strong}
\end{subfigure}
	\begin{subfigure}{.33\textwidth}
	\centering
	\includegraphics[width=1\linewidth]{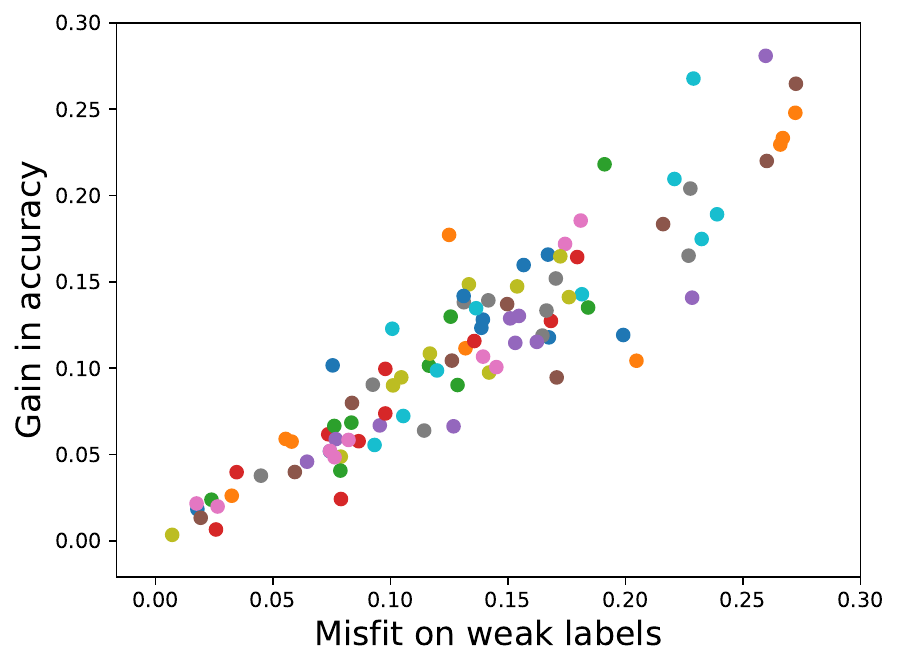}  
	\caption{Non-realizable (pretraining). \\ $h^\star$ is a 5-layer MLP, $h_w$ is a \\ 2-layer MLP, $h_s$ is an 8-layer MLP.}
	\label{fig:non-realizable-weak-to-strong}
\end{subfigure}
	\begin{subfigure}{.33\textwidth}
	\centering
	\includegraphics[width=1\linewidth]{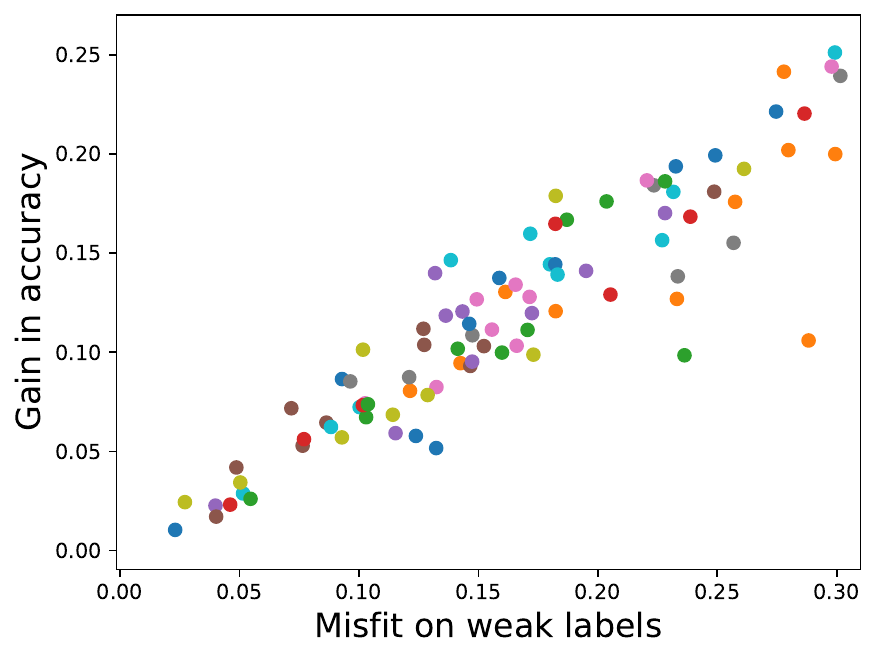}  
	\caption{Non-realizable (perturbation). \\
 $h_w = h^\star + \mathcal{N}(0, 0.05^2), h_s = h^\star + \mathcal{N}(0, 0.01^2)$. $h^\star$ is a 5-layer MLP.}
	\label{fig:perturbation-weak-to-strong}
\end{subfigure}
        \newline
        \begin{subfigure}{.33\textwidth}
	\centering
	\includegraphics[width=1\linewidth]{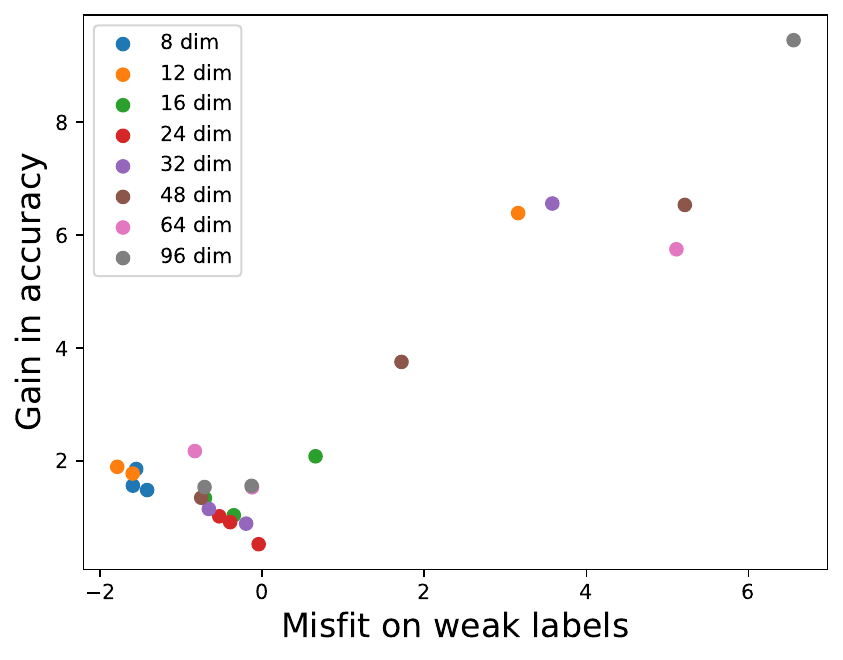}  
	\caption{MolBERT on ESOL}
	\label{fig:molbert-esol}
\end{subfigure}
	\begin{subfigure}{.33\textwidth}
	\centering
	\includegraphics[width=1\linewidth]{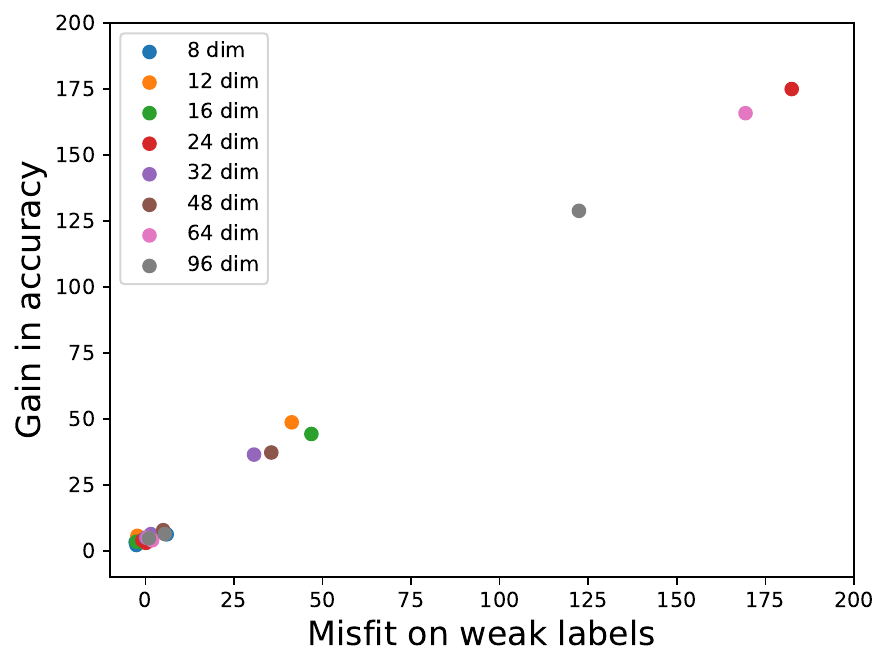}  
	\caption{MolBERT on FreeSolv}
	\label{fig:molbert-freesolv}
\end{subfigure}
	\begin{subfigure}{.33\textwidth}
	\centering
	\includegraphics[width=1\linewidth]{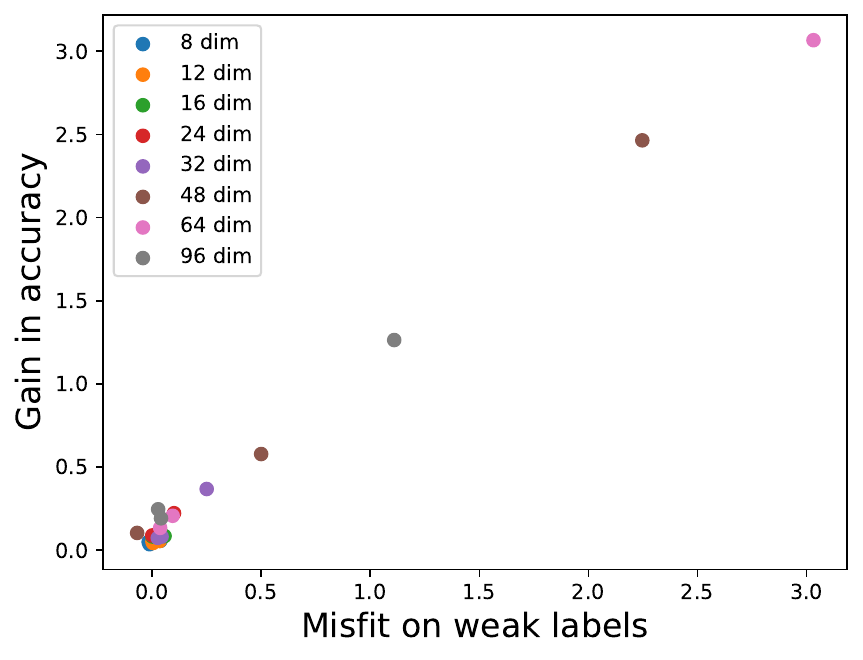}  
	\caption{MolBERT on Lipop}
	\label{fig:molbert-lipop}
\end{subfigure}
	\caption{(a),(b),(c) Experiments on synthetic data. (d),(e),(f) QSAR tasks over MolBERT representations on the ESOL, FreeSolv and Lipop datasets. For each dataset, ChemBench \cite{charleshen_2020_4054866} provides three different train, test and validation splits; multiple points of the same color correspond to weak-to-strong supervision for the same weak model (as specified in legend) across these splits.}
	\label{fig:synthetic-weak-to-strong}
\end{figure}
    
    We set the target data representation $h^\star:\R^8 \to \R^{16}$ to be a randomly initialized 5-layer multi-layer perceptron (MLP) with ReLU activations, with input dimension 8 and hidden layer dimension 16. The class $\cF$ of finetuning tasks from which the strong model (as well as the weak model) learns is simply the class of linear functions from $\R^{16}\to\R$; $\cF$ is thus a convex set (see \Cref{sec:non-convex-cF} for instances where $\cF$ is a non-convex set). The marginal data distribution $\mathcal{P}$ in our experiments is always $\mathcal{N}(0, \sigma^2 I)$. To ensure that the data is well-spread, we set $\sigma=500$.

    \paragraph{Representation Learning.} We experiment with two different ways of obtaining the weak and strong representations $h_w$ and $h_s$:
    \begin{enumerate}
        \item[(1)] \textbf{Pretraining:} We randomly sample $T$ finetuning tasks $f^{(1)},\dots,f^{(T)} \in \cF$. For each $t \in [T]$, we generate data $\{x^{(t)}_j, y^{(t)}_j\}_{j=1}^{N_r}$, where $x^{(t)}_j \sim \mathcal{P}$ and $y^{(t)}_j = f^{(t)} \circ h^\star(x^{(t)}_j)$. Loosely following \cite{tripuraneni2020theory}, we obtain $h_w$ and $h_s$ as
        \begin{align}
            h_k = \argmin_{h \in \mathcal{H}_k} \frac{1}{T \cdot N_r}\sum_{t=1}^T \sum_{j=1}^{N_r} (f^{(t)} \circ h(x^{(t)}_j) - y^{(t)}_j)^2 \qquad \text{for }k \in \{w, s\}. \label{eqn:expts-obtain-representations}
        \end{align}
        We set $\cH_w$ and $\cH_s$ (both $\R^8 \to \R^{16}$) to be the classes of 2-layer and 8-layer neural networks respectively with ReLU activations and hidden dimension 16. We obtain $h_w$ and $h_s$ via gradient descent on the representation parameters to find the minimizers in \eqref{eqn:expts-obtain-representations}. We set $T=10, N_r=2000$. Additionally, we also consider the \textit{realizable} setting (\Cref{thm:realizable}), where we explicitly set $h_s = h^\star$, and only obtain $h_w$ as above. %

        \item[(2)] \textbf{Perturbations:} We also consider another way to obtain the weak and strong representations as direct perturbations of $h^\star$. Namely, we perturb every parameter in every weight matrix in $h^\star$ by independent Gaussian noise $\mathcal{N}(0, \sigma_s^2)$ to obtain $h_s$. Similarly, we obtain $h_w$ by perturbing each parameter in $h^\star$ by $\mathcal{N}(0, \sigma_w^2)$. Ideally, we want the strong representation $h_s$ to be a closer approximation of $h^\star$ than $h_w$. Hence, we set $\sigma_s=0.01$ and $\sigma_w=0.05$.
    \end{enumerate}
    
    \paragraph{Weak Model Finetuning.} Once the representations $h_w$ and $h_s$ have been obtained and fixed, we randomly generate $M$ \textit{new} finetuning tasks $f^{(1)},\dots,f^{(M)} \in \cF$, and obtain data $\{x^{(i)}_j, y^{(i)}_j\}_{j=1}^{N_f}$ for each of these tasks. Here again, $x^{(i)}_j \sim \mathcal{P}$ and $y^{(i)}_j = f^{(i)} \circ h^\star(x^{(i)}_j)$. We set $M=100, N_f=2000$. For each task, we train the weak model on the data generated for the task, to obtain
    \begin{align}
        \label{eqn:weak-model-finetuning}
        f^{(i)}_{w} &= \argmin_{f \in \cF} \frac{1}{N_f}\sum_{j=1}^{N_f} (f \circ h_w(x^{(i)}_j) - y^{(i)}_j)^2. 
    \end{align}
    Here, the representation parameters $h_w$ are frozen, and $f^{(i)}_{w}$ is obtained via gradient descent. Note again that we are training the weak models on \textit{true} data labeled by the finetuning task $f^{(i)} \circ h^\star$.

    \paragraph{Weak-to-Strong Supervision.} Once our weak models are trained for each finetuning task, we generate \textit{weakly labeled data}. That is, for each $i \in [M]$, we generate $\{\tilde{x}^{(i)}_j, \tilde{y}^{(i)}_j\}_{j=1}^{N_f}$ where $\tilde{x}^{(i)}_j \sim \mathcal{P}$. But crucially, $\tilde{y}^{(i)}_j = f^{(i)}_{w} \circ h_w(\tilde{x}^{(i)}_j)$. We now train our strong models on this weakly labeled data. Namely, keeping the strong representation $h_s$ fixed, we obtain, via gradient descent again
    \begin{align}
        \label{eqn:weak-to-strong-supervision}
        f^{(i)}_{sw} &= \argmin_{f \in \cF} \frac{1}{N_f}\sum_{j=1}^{N_f} (f \circ h_s(\tilde{x}^{(i)}_j) - \tilde{y}^{(i)}_j)^2. 
    \end{align}
    At this point, our weak-to-strong training procedure is complete. 
    
    \paragraph{Evaluation.} For each finetuning task, we wish to evaluate the accuracy of our weak-to-strong model $f^{(i)}_{sw} \circ h_s$ with respect to the true task $f^{(i)} \circ h^\star$. 
    
    Towards this, we estimate 3 quantities:
    \begin{enumerate}
        \item[(a)] Error of the weak-to-strong model $f^{(i)}_{sw} \circ h_s$ on the \textit{true} finetuning task: $\E_{x \sim \mathcal{P}}(f^{(i)}_{sw} \circ h_s(x) - f^{(i)} \circ h^\star(x))^2$.
        \item[(b)] Error of the weak model $f^{(i)}_{w} \circ h_w$ on the \textit{true} finetuning task: $\E_{x \sim \mathcal{P}}(f^{(i)}_{w} \circ h_w(x) - f^{(i)} \circ h^\star(x))^2$.
        \item[(c)] Misfit error of the weak-to-strong model on the weakly labeled data: $\E_{x \sim \mathcal{P}}(f^{(i)}_{sw} \circ h_s(x)-f^{(i)}_{w} \circ h_w(x))^2$.
    \end{enumerate}
    Each of these quantities are estimated from a fresh sample of size $N_f$ drawn from $\mathcal{P}$. For each task $i \in [M]$, we plot the difference (b)-(a), namely the \textbf{Gain in Accuracy}, on the y-axis, versus the \textbf{Misfit} (c) on the x-axis. \Cref{fig:realizable-weak-to-strong} has the results for the realizable case where $h_s = h^\star$ and $h_w$ is obtained by pretraining. \Cref{fig:non-realizable-weak-to-strong} has the results for the non-realizable case where both $h_w$ and $h_s$ are obtained by pretraining. \Cref{fig:perturbation-weak-to-strong} has the results for the non-realizable case where $h_w$ and $h_s$ are obtained by directly perturbing the weights in $h^\star$. For reference, recall that \Cref{thm:non-realizable-finite-samples} indicates that the gain in accuracy is (upto error terms) \textit{at least} the misfit. The plots in \Cref{fig:synthetic-weak-to-strong} suggest that the gain is more or less \textit{exactly} the misfit, which is in agreement with our theory!

    \subsection{Molecular Prediction}
    \label{sec:molecular-prediction}
    We also validate our conceptual insights on real-world molecular prediction datasets. Specifically, we follow the Quantitative Structure-Activity Relationship (QSAR) task setup in the MolBERT \cite{fabian2020molecular} paper. These tasks involve predicting physical properties of molecules like solubility, lipophilicity, etc. We consider three regression datasets: ESOL, FreeSolv and Lipop. These datasets are part of the MoleculeNet \cite{wu2018moleculenet} benchmark suite, and have been curated into train, test and validation splits by ChemBench \cite{charleshen_2020_4054866}. The MolBERT paper provides weights for a standard-size BERT \cite{devlin2018bert} architecture (hidden dimension 768, 12 layers, 12 attention heads) pretrained for 100 epochs on the GuacaMol \cite{brown2019guacamol} dataset. We use these weights as the strong representation $h_s$. For the weak representations $h_w$, we run the pretraining pipeline for substantially smaller transformer architectures and lesser compute time. Specifically, we consider transformers with just 2 layers and 2 attention heads, and vary the hidden size in $\{8,12,16,32,48,64,96\}$. For each of these settings, we run the pretraining tasks for a mere 2 epochs to obtain different weak representations $h_w$. 
    
    Once we have the representations $h_s$ and $h_w$, we can finetune a linear layer on top of these for each of the three regression datasets. We run the entire weak-to-strong supervision pipeline from above, where we weakly supervise the strong model $h_s$ on labels given by each of the weak models $h_w$. The results are given in Figures \ref{fig:molbert-esol}, \ref{fig:molbert-freesolv} and \ref{fig:molbert-lipop}. Again, we see that the gain in accuracy of the weakly supervised strong models is accurately characterized by their misfit on the weak labels.

    We were also able to see an otherwise useful algorithmic insight in these experiments. Consider a setting where we have at our disposal various weak models, and have performed the weak-to-strong supervision pipeline separately on each of them. We now want to deploy one of the weakly trained strong models; our goal is to choose the one that gets the least error on the true data distribution. Recall that \Cref{thm:non-realizable-finite-samples} guarantees that the error of a weakly supervised strong model is upper bounded (upto error terms) by the difference between the weak model's error and misfit. This suggests a natural heuristic: sort the strong models by the difference between the corresponding weak model's error and the misfit, and choose the one for which this quantity is smallest. 
    We observed that this heuristic ends up working quite well---\Cref{table:lipop-heuristic} shows the numbers for the Lipop dataset, while the results for ESOL and FreeSolv are in \Cref{sec:algorithmic-insight-esol-freesolv}.

    \begin{table}[ht!]
        \centering
        \begin{tabular}{|c c c|} 
         \hline
         Hidden dimension & Weak error - Misfit & True error of weakly-supervised strong model \\ [0.5ex] 
         \hline
         96 & $\mathbf{0.8969 \pm 0.0327}$ & $\mathbf{1.0713 \pm 0.0489}$ \\
         48 & $0.9731 \pm 0.0707$ & $1.1293 \pm 0.0418$ \\
         24 & $1.0331 \pm 0.0449$ & $1.1204 \pm 0.0261$ \\
         64 & $1.0619 \pm 0.0441$ & $1.1436 \pm 0.0124$ \\
         32 & $1.0624 \pm 0.0527$ & $1.1302 \pm 0.0220$ \\
         16 & $1.1456 \pm 0.0276$ & $1.1950 \pm 0.0484$ \\
         12 & $1.1499 \pm 0.0177$ & $1.1869 \pm 0.0297$ \\
         8  & $1.1958 \pm 0.0194$ & $1.2396 \pm 0.0310$
         \\ 
         \hline
        \end{tabular}
        \caption{Heuristic rule to choose among different weakly-supervised models finetuned on Lipop: choose the strong model that has the smallest difference (averaged across the 3 splits) between weak model error and misfit ($\pm$ is the std across splits). As we see, this model has the smallest true error.}
        \label{table:lipop-heuristic}
    \end{table}

    \subsection{Strong-to-Weak Generalization and Low Sample Regime}
    \label{sec:low-sample-regime}
    In our simulations, we also consider an additional thought experiment, where we reverse the weak and strong models. 
That is, in the non-realizable case with pretraining (\Cref{fig:non-realizable-weak-to-strong}), we can have $\cH_w$ be the class of 8-layer MLPs, and $\cH_s$ be the class of 2-layer MLPs. Similarly, in the case with perturbations (\Cref{fig:perturbation-weak-to-strong}), we can set $\sigma_w=0.01$, and $\sigma_s=0.05$. In this case, because the weak models have now become powerful, and can represent the true data well, the weak labels are essentially the \textit{true} labels. Hence, if we were to obtain the same plots, we would now expect the misfit on weak labels to essentially correspond to the \textit{loss} in accuracy of the strong model on true data, compared to the weak model. This is confirmed in Figures \ref{fig:non-realizable-strong-to-weak} and \ref{fig:perturbation-strong-to-weak}: the plots are mirror reflections of Figures \ref{fig:non-realizable-weak-to-strong} and \ref{fig:perturbation-weak-to-strong}!

    Now, suppose that we are in a setting where the number of samples available for the representation learning task is scarce. Concretely, consider the original setting of \Cref{fig:non-realizable-weak-to-strong} with $\cH_w$ and $\cH_s$ back to being 2-layer and 8-layer MLPs respectively. Recall that for learning the representations $h_w, h_s$, we sampled $T=10$ finetuning tasks $f^{(1)},\dots, f^{(T)} \in \cF$, and obtained $N_r=2000$ samples labeled according to each $f^{(t)} \circ h^\star$. Now instead, consider setting $T=5, N_r=250$. The number of samples $N_f$ in the weak model finetuning and weak-to-strong supervision stages is still maintained at $N_f=2000$. We run the entire weak-to-strong supervision pipeline for this parameter setting. The rationale is that, when the representation learning task is data-deprived, the weak model, by virtue of being simpler, learns a better representation than the strong model, which is more complex. Indeed, this is what we observed, as shown in \Cref{fig:low-sample-regime-250}. Observe that the trend in the plot is very similar to Figures \ref{fig:non-realizable-strong-to-weak} and \ref{fig:perturbation-strong-to-weak}, where we had \textit{explicitly} swapped the weak and strong models. This suggests that in the low-sample regime too, the weak and strong models have reversed roles. Thus, the definition of weak and strong models in the framework of weak-to-strong generalization should not solely be based on expressive power; instead, these roles should be assigned based on the quality of representations.
    
    \begin{figure}[t]
	\begin{subfigure}{.33\textwidth}
	\centering
	\includegraphics[width=1\linewidth]{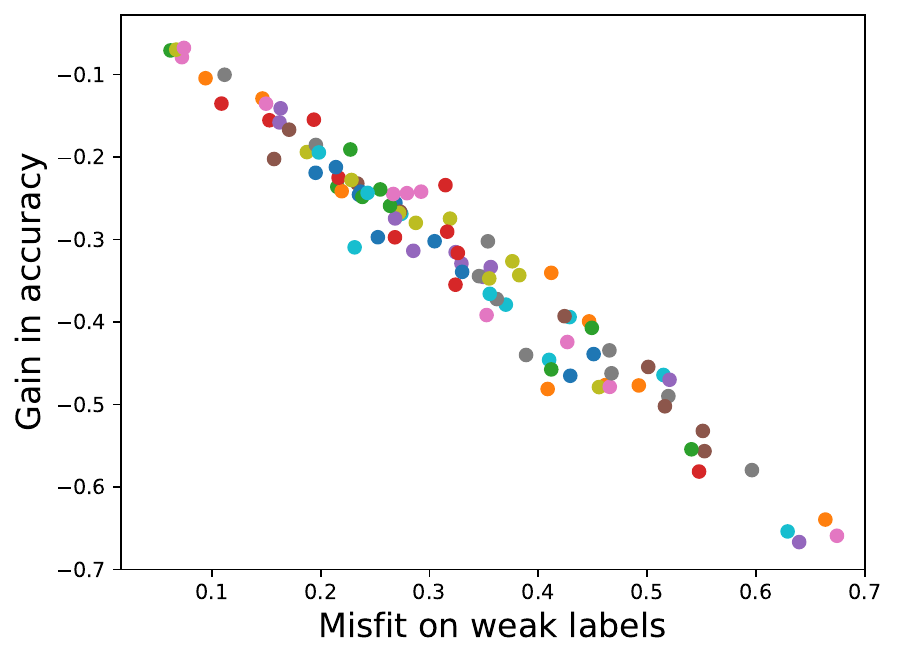}  
	\caption{Non-realizable (pretraining). \\ $h^\star$: 5-layer MLP, $h_w$: 8-layer \\ MLP, $h_s$: 2-layer MLP.}
	\label{fig:non-realizable-strong-to-weak}
\end{subfigure}
	\begin{subfigure}{.33\textwidth}
	\centering
	\includegraphics[width=1\linewidth]{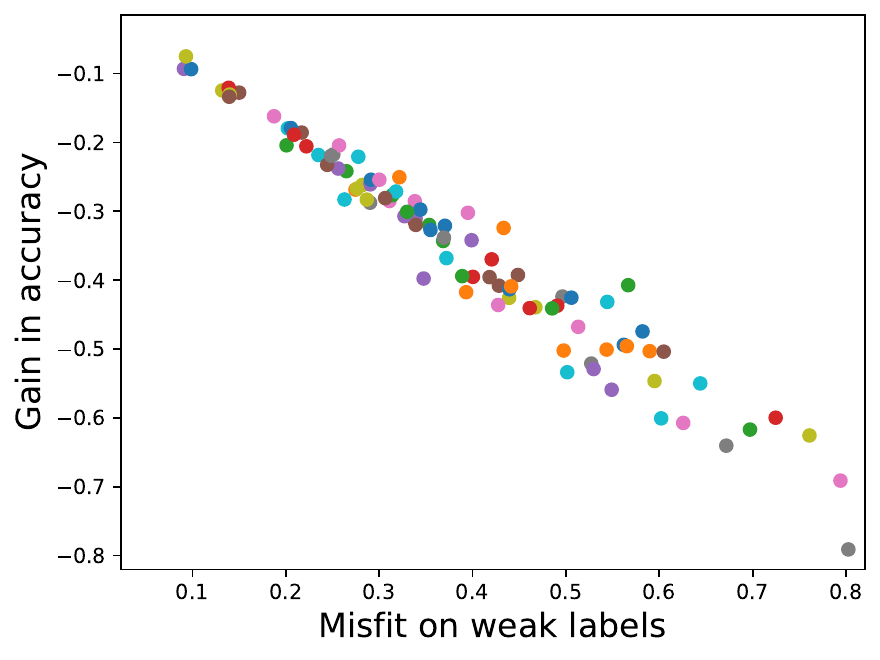}  
	\caption{Non-realizable (perturbation). \\
 $h_w = h^\star + \mathcal{N}(0, 0.01^2), h_s = h^\star \\+ \mathcal{N}(0, 0.05^2)$. $h^\star$ is 5-layer MLP.}
	\label{fig:perturbation-strong-to-weak}
\end{subfigure}
	\begin{subfigure}{.33\textwidth}
	\centering
	\includegraphics[width=1\linewidth]{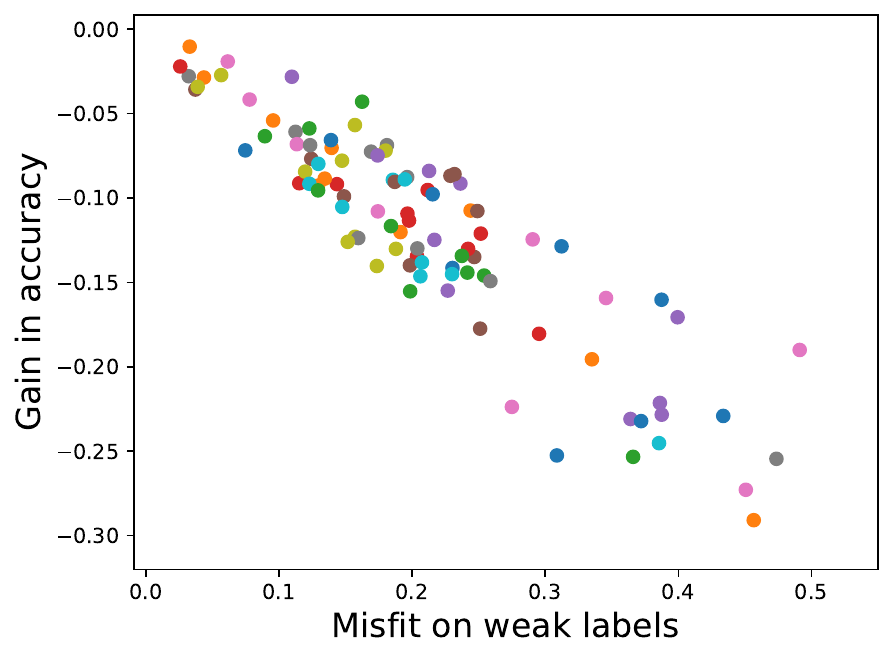}  
	\caption{Non-realizable (pretraining). \\ $h^\star$: 5-layer MLP, $h_w$: 2-layer MLP, $h_s$: 8-layer MLP. $T=5, N_r=250$.}
	\label{fig:low-sample-regime-250}
\end{subfigure}
	\caption{Strong-to-weak generalization. The roles of the weak and strong models have reversed.}
	\label{fig:synthetic-strong-to-weak}
\end{figure}

\section{Conclusion}
\label{sec:conclusion}
Employing a representation-theoretic perspective, we characterized the gain in performance in weak-to-strong generalization. Our results apply in the setting of learning real-valued functions with the least squares loss, where the strong model learns the finetuning task by optimizing over a convex set of functions. We quantify the gain in accuracy of the weakly-supervised strong model over its weak supervisor in terms of the misfit between the strong and weak models.

Our work has natural limitations. Our theorems notably do not apply when the set from which the strong model learns the finetuning task is not convex. Nevertheless, our experiments in \Cref{sec:non-convex-cF} do suggest that our results should (at least qualitatively) hold even beyond the convex case. Our work also does not address classification tasks (see also \Cref{sec:discussion-classification}), and it would be interesting to see if similar results could be obtained for more general loss functions. Finally, while we do demonstrate results on real-world datasets, we anticipate that significantly larger-scale experiments on regression datasets used to train modern AI models will yield further interesting insights.

\begin{ack}
    This work is supported by Moses Charikar and Gregory Valiant's Simons Investigator Awards. The authors would like to thank Percy Liang and Tengyu Ma for helpful discussions, and also the anonymous reviewers for engaging with the paper and pointing out references.
\end{ack}

\bibliography{refs}
\bibliographystyle{alpha}

\newpage
\appendix
\section{Non-realizable Weak-to-Strong Generalization}
\label{sec:non-realizable-proofs}

\begin{proof}[Proof of \Cref{thm:non-realizable-finite-samples}]
    The setting under consideration is depicted in \Cref{fig:non-realizable-finite-samples}.

    \begin{figure}[H]
        \centering
        \includegraphics[scale=0.4]{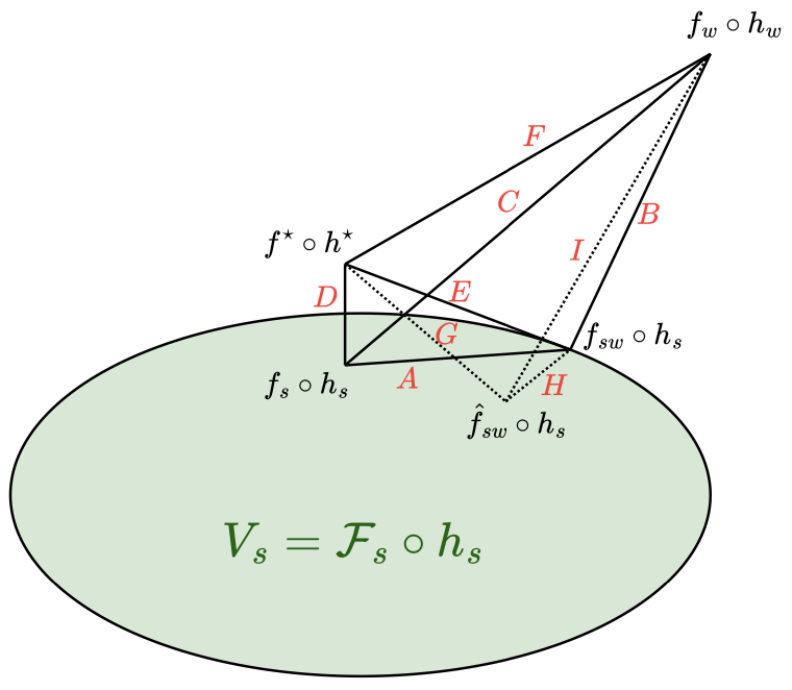}
        \caption{Non-realizable weak-to-strong generalization where $f^\star \circ h^\star \notin V_s$, and we use a finite sample to perform weak-to-strong supervision. The Pythagorean theorem, along with uniform convergence and triangle inequalities, yield the desired result.}
        \label{fig:non-realizable-finite-samples}
    \end{figure}

    Let 
    \begin{align*}
        A &= d_\cP(f_{sw} \circ h_s, f_s \circ h_s)\\
        B &=d_\cP(f_{sw} \circ h_s, f_w \circ h_w) \\
        C &= d_\cP(f_w \circ h_w, f_s \circ h_s)\\
        D &= d_\cP(f_s \circ h_s, f^\star \circ h^\star) =\eps\\
        E &= d_\cP(f_{sw} \circ h_s, f^\star \circ h^\star) \\
        F &= d_\cP(f_{w} \circ h_w, f^\star \circ h^\star) \\
        G &= d_\cP(\hat{f}_{sw} \circ h_s, f^\star \circ h^\star) \\
        H &= d_\cP(\hat{f}_{sw} \circ h_s, f_{sw} \circ h_s) \\
        I &= d_\cP(\hat{f}_{sw} \circ h_s, f_w \circ h_w).
    \end{align*}
    Note that by virtue of the range of $f^\star, f_w$ and all functions in $\cF$ being absolutely bounded, $d_\cP$ is also bounded above by a constant.
    
    We want to show that $G \le F - I + O(\sqrt{\eps}) + O\left(\frac{\cC_{\cF}}{n}\right)^{\frac14}$. 
    
    From the proof of \Cref{thm:realizable}, we know that
    \begin{align}
        C \ge A + B. \label{eqn:1}
    \end{align}
    But note also that by a ``triangle inequality'' (\Cref{claim:triangle-inequality}),
    \begin{align}
        &\sqrt{E} \le \sqrt{A} + \sqrt{D} \nonumber \\
        \implies \qquad & E \le A + D + 2\sqrt{AD}. \label{eqn:2}
    \end{align}
    
    Combining \eqref{eqn:1} and \eqref{eqn:2}, we get
    \begin{align}
        E \le C + D - B + 2\sqrt{AD}. \label{eqn:3}
    \end{align}
    By the same triangle inequality argument,
    \begin{align}
        &\sqrt{C} \le \sqrt{D} + \sqrt{F} \label{eqn:4} \\
        \implies \qquad &C \le D + F + 2\sqrt{DF}. \label{eqn:5}
    \end{align}
    Substituting \eqref{eqn:5} in \eqref{eqn:3}, we get
    \begin{align}
        E &\le F - B + 2D + 2\sqrt{DF} + 2\sqrt{AD} \label{eqn:non-realizable-projection}
    \end{align}

    By a uniform convergence argument (\Cref{lem:uniform-convergence}), we have that with probability at least $1-\delta$ over the draw of $\{(x_1,y_1),\dots, (x_n,y_n)\}$ that were used to construct $\hat{f}_{sw}$,
    \begin{align}
        I &\le B + O\left(\sqrt{\frac{\cC_{\cF}}{n}}\right) + O\left(\sqrt{\frac{\log(1/\delta)}{n}}\right). \label{eqn:uc}
    \end{align}

    Substituting \eqref{eqn:uc} in \eqref{eqn:non-realizable-projection}, we get
    \begin{align}
        E \le F - I + 2D + 2\sqrt{DF} + 2\sqrt{AD} + O\left(\sqrt{\frac{\cC_{\cF}}{n}}\right) + O\left(\sqrt{\frac{\log(1/\delta)}{n}}\right). \label{eqn:uc-substituted}
    \end{align}
    Because $f_{sw} \circ h_s$ is the projection of $f_w \circ h_w$ onto $V_s$, we know (e.g., by the argument in the proof of \Cref{thm:realizable}) that
    \begin{align}
        I \ge H + B. \label{eqn:uc-projection}
    \end{align}
    Combining \eqref{eqn:uc} and \eqref{eqn:uc-projection}, we get
    \begin{align}
        H \le  O\left(\sqrt{\frac{\cC_{\cF}}{n}}\right) + O\left(\sqrt{\frac{\log(1/\delta)}{n}}\right). \label{eqn:uc-error-very-small}
    \end{align}
    Finally, by another triangle inequality, note that
    \begin{align}
        &\sqrt{G} \le \sqrt{E} + \sqrt{H} \nonumber \\
        \implies \qquad &G \le E + H + 2\sqrt{EH}. \label{eqn:6}
    \end{align}
    Substituting \eqref{eqn:uc-substituted} in \eqref{eqn:6}, we get
    \begin{align*}
        G &\le F - I + 2D + 2\sqrt{DF} + 2\sqrt{AD} + O\left(\sqrt{\frac{\cC_{\cF}}{n}}\right) + O\left(\sqrt{\frac{\log(1/\delta)}{n}}\right) + H + 2\sqrt{EH} \\
        &\le F - I + 2D + 2\sqrt{DF} + 2\sqrt{AD} + 2\sqrt{E} \cdot \left[O\left(\frac{\cC_{\cF}}{n}\right)^{\frac14} + O\left(\frac{\log(1/\delta)}{n}\right)^{\frac14}\right] \qquad (\text{from \eqref{eqn:uc-error-very-small}}) \\
        &\le F - I + O(\sqrt{\eps}) + O\left(\frac{\cC_{\cF}}{n}\right)^{\frac14} + O\left(\frac{\log(1/\delta)}{n}\right)^{\frac14},
    \end{align*}
    where in the last inequality, we substituted $D=\eps$, used that $d_\cP$ is bounded above by a constant, and instantiated asymptotics with respect to $\eps \to 0$ and $n \to \infty$.
\end{proof}

\begin{lemma}[Uniform Convergence]
    \label{lem:uniform-convergence}
    Let $(x_1,y_1),\dots,(x_n, y_n)$ be an i.i.d. training sample, where each $x_i \sim \cP$ and $y_i = g(x_i)$ for some unknown target function $g$. For a fixed strong model representation $h_s$, let 
    \begin{align*}
        f_{sw} = \argmin_{f \in {\cF}}\; d_{\cP}(f \circ h_s, g), \qquad \hat{f}_{sw} = \argmin_{f \in \cF} \frac{1}{n}(f \circ h_s(x_i) - y_i)^2. 
    \end{align*}
    Assume that the range of $g$ and functions in $\cF$ is absolutely bounded. Then, with probability at least $1-\delta$ over the draw of $(x_1,y_1),\dots,(x_n, y_n)$, we have that
    \begin{align*}
        \left|d_{\cP}(\hat{f}_{sw} \circ h_s, g) - d_{\cP}(f_{sw} \circ h_s, g) \right| \le O\left(\sqrt{\frac{\cC_{\cF}}{n}}\right) + O\left(\sqrt{\frac{\log(1/\delta)}{n}}\right).
    \end{align*}
\end{lemma}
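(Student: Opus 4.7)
The plan is to invoke the standard ERM excess-risk decomposition, reducing the lemma to a uniform deviation bound for the squared-loss class composed with the strong representation $h_s$. Write $L(f) := d_\cP(f \circ h_s, g)$ for the population squared loss and $L_n(f) := \frac{1}{n}\sum_{i=1}^n (f \circ h_s(x_i) - y_i)^2$ for the empirical version, so that $f_{sw}$ minimizes $L$ over $\cF$ and $\hat{f}_{sw}$ minimizes $L_n$ over $\cF$. A standard add-and-subtract manipulation, using both optimalities, gives
\begin{align*}
0 \le L(\hat{f}_{sw}) - L(f_{sw}) &= \bigl[L(\hat{f}_{sw}) - L_n(\hat{f}_{sw})\bigr] + \bigl[L_n(\hat{f}_{sw}) - L_n(f_{sw})\bigr] + \bigl[L_n(f_{sw}) - L(f_{sw})\bigr] \\
&\le 2 \sup_{f \in \cF} |L(f) - L_n(f)|,
\end{align*}
which yields the absolute-value bound claimed in the lemma once we have a high-probability bound on the uniform deviation at rate $O(\sqrt{\cC_\cF/n}) + O(\sqrt{\log(1/\delta)/n})$.

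To control this uniform deviation I would proceed in two standard steps. First, McDiarmid's bounded-differences inequality: since the ranges of $g$ and of all $f \in \cF$ are absolutely bounded by some constant $B$, each per-example squared loss lies in $[0, O(B^2)]$, so $\sup_{f\in\cF}|L(f)-L_n(f)|$ has bounded differences of order $B^2/n$ and concentrates around its expectation with a deviation of order $B^2\sqrt{\log(1/\delta)/n}$. Second, symmetrization bounds the expectation by twice the Rademacher complexity of the squared-loss class $\{(x,y)\mapsto(f\circ h_s(x)-y)^2 : f \in \cF\}$. Since the squared loss is $O(B)$-Lipschitz on the bounded range, Talagrand's contraction lemma reduces this to the Rademacher complexity $\mathfrak{R}_n$ of the composed class $\{f\circ h_s : f \in \cF\}$. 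Identifying the abstract constant $\cC_\cF$ in the lemma statement with $n \cdot \mathfrak{R}_n^2$ (equivalently, with a standard capacity measure such as the pseudo-dimension of $\cF$ composed with $h_s$ that yields a rate of $\sqrt{\cC_\cF/n}$) produces the desired bound after combining the two pieces.

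The main obstacle is essentially notational rather than technical: the lemma leaves $\cC_\cF$ as an unspecified complexity constant, so the cleanest presentation is to \emph{define} $\cC_\cF$ in precisely the way that makes the Rademacher bound of the form $\sqrt{\cC_\cF/n}$. Once this bookkeeping is fixed, every remaining step (ERM decomposition, McDiarmid, symmetrization, contraction) is a textbook application of tools already used by the paper's references (e.g., \cite{tripuraneni2020theory}), and no new conceptual ingredient is needed.
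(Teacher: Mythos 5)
Your proposal is correct and follows essentially the same route as the paper's own proof: the same three-term decomposition with the middle term nonpositive by optimality of $\hat{f}_{sw}$, and the remaining empirical-vs-population deviations controlled via McDiarmid, symmetrization, contraction, and the Rademacher complexity of $\cF \circ h_s$, with $\cC_\cF$ absorbed as an abstract capacity constant. No gap to flag.
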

\begin{proof}
    Given sample $(x_1,y_1),\dots,(x_n, y_n)$ where each $x_i \sim \cP$ and $y_i=g(x_i)$, define
    \begin{align*}
        \hat{d}_\cP(s, g) = \frac{1}{n}\sum_{i=1}^n (s(x_i)-y_i)^2
    \end{align*}
    for any function $s$. Then, we have that
    \begin{align}
        \label{eqn:risk-decomposition}
        d_{\cP}(\hat{f}_{sw} \circ h_s, g) - d_{\cP}(f_{sw} \circ h_s, g) &= \underbrace{d_{\cP}(\hat{f}_{sw} \circ h_s, g) - \hat{d}_{\cP}(\hat{f}_{sw} \circ h_s, g)}_{a} \nonumber \\
        &+ \underbrace{\hat{d}_{\cP}(\hat{f}_{sw} \circ h_s, g) -  \hat{d}_{\cP}(f_{sw} \circ h_s, g)}_{b} \nonumber \\
        &+ \underbrace{\hat{d}_{\cP}(f_{sw} \circ h_s, g) - d_{\cP}(f_{sw} \circ h_s, g)}_{c}.
    \end{align}
    By the definition of $\hat{f}_{sw}$, the second term $b\le 0$ in \eqref{eqn:risk-decomposition}. The terms $a$ and $c$ measure the difference between the empirical distance and true population distance, and can be controlled by a standard uniform convergence argument, as in \cite[Theorem 2]{tripuraneni2020theory}. For completeness, we sketch the main parts of the argument here.

    Let $S = \{(x_1,y_1),\dots,(x_n,y_n)\}$, where $x_i \sim \cP$ and $y_i = g(x_i)$. Using McDiarmid's inequality, and by a double-sampling and symmetrization argument, it first holds that with probability at least $1-\delta$,
    \begin{align*}
        \sup_{f \in \cF} |\hat{d}_{\cP}(f \circ h_s, g) - d_{\cP}(f \circ h_s, g)| &\le O \left( \mathcal{R}_n (l(\cF \circ h_s))\right) + O\left(\sqrt{\frac{\log(1/\delta)}{n}}\right),
    \end{align*}
    where $\mathcal{R}_n (l(\cF \circ h_s))$ is the \textit{Rademacher complexity} of the loss class of $\cF \circ h_s$:
    \begin{align*}
        \mathcal{R}_n (l(\cF \circ h_s)) &= \E_{S} \E_{\eps_i \sim \{-1,1\}} \sup_{f \in \cF} \frac{1}{n}\sum_{i=1}^n \eps_i \cdot (f \circ h_s(x_i)-y_i)^2.
    \end{align*}
    We can then use the assumption that the range of $g$ and $\cF$ is absolutely bounded, which implies that the squared loss function is both bounded and Lipschitz in both arguments. This allows us to use the contraction principle \cite[Theorem 4.12]{ledoux2013probability} so as to move from the Rademacher complexity of the loss class $l(\cF \circ h_s)$ to that of $\cF \circ h_s$ itself, and claim that with probability at least $1-\delta$,
    \begin{align}
        \sup_{f \in \cF} |\hat{d}_{\cP}(f \circ h_s, g) - d_{\cP}(f \circ h_s, g)| &\le O \left( \mathcal{R}_n (\cF \circ h_s)\right) + O\left(\sqrt{\frac{\log(1/\delta)}{n}}\right) \label{eqn:rademacher-cxty-bound}
    \end{align}
    Finally, the Rademacher complexity $\mathcal{R}_n (\cF \circ h_s)$ can be upper bounded by a quantity known as the \textit{worst-case Gaussian complexity} of $\cF$; in any case, for a majority of parametric function classes $\cF$, this quantity scales as $\sqrt{\frac{\cC_{\cF}}{n}}$, where $\cC_{\cF}$ is a constant capturing the inherent complexity of $\cF$. Plugging this into \eqref{eqn:rademacher-cxty-bound} yields the desired bound.
\end{proof}

\begin{claim}[``Triangle Inequality'']
    \label{claim:triangle-inequality}
    For any functions $f,g,h$,
    \begin{align*}
        \sqrt{d_\cP(f,g)} \le \sqrt{d_\cP(f,h)} + \sqrt{d_\cP(h,g)}.
    \end{align*}
\end{claim}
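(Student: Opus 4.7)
The plan is to recognize that $\sqrt{d_\cP(\cdot,\cdot)}$ is exactly the $L^2(\cP)$ distance between functions. More precisely, on the space of square-$\cP$-integrable functions, the bilinear form $\langle u,v\rangle_\cP := \E_{x\sim\cP}[u(x)v(x)]$ is an inner product (modulo $\cP$-almost-everywhere equivalence), and $\sqrt{d_\cP(f,g)} = \sqrt{\E_{x\sim\cP}(f(x)-g(x))^2} = \|f-g\|_{L^2(\cP)}$ is the induced norm. Since any norm satisfies the triangle inequality, applying this to the pointwise decomposition $f-g = (f-h) + (h-g)$ immediately yields the claim. This is just Minkowski's inequality in $L^2(\cP)$.

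For a self-contained presentation (which seems more appropriate for the appendix of this paper), I would avoid invoking Minkowski by name and instead derive the bound directly from Cauchy--Schwarz. First, set $U(x) = f(x) - h(x)$ and $V(x) = h(x) - g(x)$, so that pointwise $f(x) - g(x) = U(x) + V(x)$. Expanding the square and taking expectations under $\cP$ gives
\begin{align*}
d_\cP(f,g) \;=\; \E_{x\sim\cP}\bigl[(U(x)+V(x))^2\bigr] \;=\; \E_\cP[U^2] + 2\,\E_\cP[UV] + \E_\cP[V^2].
\end{align*}
Second, I would apply Cauchy--Schwarz to the cross term to get $\E_\cP[UV] \le \sqrt{\E_\cP[U^2]\cdot \E_\cP[V^2]} = \sqrt{d_\cP(f,h)\cdot d_\cP(h,g)}$. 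Substituting this back gives
\begin{align*}
d_\cP(f,g) \;\le\; d_\cP(f,h) + 2\sqrt{d_\cP(f,h)\cdot d_\cP(h,g)} + d_\cP(h,g) \;=\; \bigl(\sqrt{d_\cP(f,h)} + \sqrt{d_\cP(h,g)}\bigr)^2,
\end{align*}
and taking square roots (both sides are nonnegative) yields the stated inequality.

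There is essentially no obstacle here: the claim is a routine instance of the triangle inequality for the Hilbert norm on $L^2(\cP)$, and the only ingredient beyond basic algebra is one application of Cauchy--Schwarz. The only minor care required is finiteness of the expectations appearing in the argument, but this is guaranteed in every application of the claim within the proof of \Cref{thm:non-realizable-finite-samples} by the paper's standing assumption that $f^\star$, $f_w$, and all functions in $\cF$ have absolutely bounded range, so all the relevant compositions lie in $L^2(\cP)$.
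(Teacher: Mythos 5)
Your argument is correct and is essentially the paper's own proof: both expand $d_\cP(f,g)$ via the pointwise decomposition $f-g=(f-h)+(h-g)$ and bound the cross term with Cauchy--Schwarz, which is just the $L^2(\cP)$ triangle inequality made explicit. The framing via Minkowski and the remark on finiteness of the expectations are fine but add nothing beyond the paper's computation.
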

\begin{proof}
    \begin{align}
        \left(\sqrt{d_\cP(f,h)} + \sqrt{d_\cP(h,g)}\right)^2 &= \E_{x \sim \cP}(f(x)-h(x))^2 + \E_{x \sim \cP}(h(x)-g(x))^2 \nonumber \\
        &\quad + 2\sqrt{\E_{x \sim \cP}(f(x)-h(x))^2\E_{x \sim \cP}(h(x)-g(x))^2}, \label{eqn:7} 
    \end{align}
    but also,
    \begin{align*}
        d_\cP(f,g) &= \E_{x \sim \cP}(f(x)-g(x))^2 \\
        &\hspace{-1cm}= \E_{x \sim \cP}(f(x)-h(x) +h(x)- g(x))^2 \\
        &\hspace{-1cm}= \E_{x \sim \cP}(f(x)-h(x))^2 + \E_{x \sim \cP}(h(x)-g(x))^2 +2\cdot \E_{x \sim \cP}[(f(x)-h(x))(h(x)-g(x))] \\
        &\hspace{-1cm}= \left(\sqrt{d_\cP(f,h)} + \sqrt{d_\cP(h,g)}\right)^2 + 2\cdot \E_{x \sim \cP}[(f(x)-h(x))(h(x)-g(x))] \\
        &\hspace{-1cm}\quad - 2\sqrt{\E_{x \sim \cP}(f(x)-h(x))^2\E_{x \sim \cP}(h(x)-g(x))^2} \quad (\text{using \eqref{eqn:7}})\\
        &\hspace{-1cm}\le \left(\sqrt{d_\cP(f,h)} + \sqrt{d_\cP(h,g)}\right)^2,
    \end{align*}
    where the last step uses the Cauchy-Schwarz inequality.
\end{proof}

\section{Heuristic to Choose Among Weak Models}
\label{sec:algorithmic-insight-esol-freesolv}

Similar to \Cref{table:lipop-heuristic}, we report additional results on the ESOL and FreeSolv datasets about heuristically choosing amongt different weakly-supervised models based on the one that minimizes the upper bound given by \Cref{thm:realizable} (upto error terms). The numbers are averaged across 3 train-test-validation splits. For ESOL (\Cref{table:esol-heuristic}), weakly supervising with the weak  model that results in the smallest difference between weak error and misfit also results in the strong model with least error. This was not exactly the case with FreeSolv (\Cref{table:freesolv-heuristic}); however, we can still see the general trend that weak models that exhibit a smaller upper bound also tend to result in strong models that have better errors. We note that the upper bound in some cases is negative, which suggests that the error terms (as in \Cref{thm:non-realizable-finite-samples}) are non-negligible.
\begin{table}[H]
    \centering
    \begin{tabular}{|c c c|} 
     \hline
     Hidden dimension & Weak error - Misfit & True error of strong model trained on weak \\ [0.5ex] 
     \hline
     96 & $\mathbf{-0.8880 \pm 0.4314}$ & $\mathbf{1.3914 \pm 0.0906}$\\
     48 & $-0.3634 \pm 0.1884$ & $1.4565 \pm 0.1860$ \\
     32 & $-0.3016 \pm 0.5698$ & $1.6583 \pm 0.2160$ \\
     64 & $-0.2450 \pm 0.7978$ & $1.5232 \pm 0.2291$ \\
     12 & $0.0083 \pm 0.2838$ & $3.4396 \pm 0.1433$ \\
     16 & $0.2582 \pm 0.1028$ & $1.8794 \pm 0.2185$ \\
     8 & $0.2618 \pm 0.2065$ & $3.4200 \pm 0.0753$ \\
     24  & $0.4904 \pm 0.1188$ & $1.6342 \pm 0.3086$
     \\ 
     \hline
    \end{tabular}
    \caption{ESOL}
    \label{table:esol-heuristic}
\end{table}

\begin{table}[H]
    \centering
    \begin{tabular}{|c c c|} 
     \hline
     Hidden dimension & Weak error - Misfit & True error of strong model trained on weak \\ [0.5ex] 
     \hline
     32 & $\mathbf{-0.1588 \pm 1.0254}$ & $4.9967 \pm 0.5656$ \\
     96 & $1.0896 \pm 1.0111$ & $4.7828 \pm 2.2787$ \\
     64 & $2.9062 \pm 2.8130$ & $\mathbf{3.8404 \pm 0.5906}$ \\
     12 & $2.9758 \pm 2.8664$ & $9.5102 \pm 2.4511$ \\
     48 & $3.0038 \pm 1.3063$ & $4.5212 \pm 0.3936$ \\
     16 & $4.8560 \pm 2.5910$ & $7.3637 \pm 1.1048$ \\
     24 & $6.4952 \pm 4.9463$ & $6.5863 \pm 1.2709$ \\
     8  & $6.7586 \pm 0.9997$ & $10.4474 \pm 1.6261$
     \\ 
     \hline
    \end{tabular}
    \caption{FreeSolv}
    \label{table:freesolv-heuristic}
\end{table}

\section{Experiments on NLP Datasets}
\label{sec:nlp-expts}

We include additional experimental results on NLP regression tasks corresponding to two publicly available natural language datasets. The gain-misfit plots in these experiments largely follow the same trend as in \Cref{sec:synthetic-expts} and \Cref{sec:molecular-prediction}.

\subsection{Essay Scoring}
\label{sec:essay-scoring}

We use the dataset from the Kaggle competition ``Feedback Prize - English Language Learning" \cite{feedback-prize-english-language-learning} conducted in 2022. Here, the task is to assign a score to essays written by 8th-12th grade English Language Learners (ELLs). An essay is assigned a score from 1-5 on each of 6 rubrics: cohesion, syntax, vocabulary, phraseology, grammar, conventions. Thus, a separate regression task can be obtained by treating each of these scores as the target label. The data source provides labels only for the train split; hence, we randomly do a 70-30 split of this data into train and test data.

We fix the strong model representation $h_s$ to be a standard pretrained bert-base-uncased model comprising of 110M parameters. We vary the weak model representation $h_w$ amongst different pretrained miniature BERT architectures \cite{turc2019}. Namely, we vary the hidden size $H \in \{128,256,512,768\}$ and the number of transformer layers $L \in \{2,4,6,8\}$ (the number of attention heads is always fixed at $H/64$). Pretrained weights for each of these is available online.

We perform the entire weak-to-strong supervision experiment identically as in \Cref{sec:molecular-prediction}---the gain-misfit plots are shown in \Cref{fig:essay-scoring}.

\begin{figure}[H]
    \begin{subfigure}{.33\textwidth}
	\centering
	\includegraphics[width=1\linewidth]{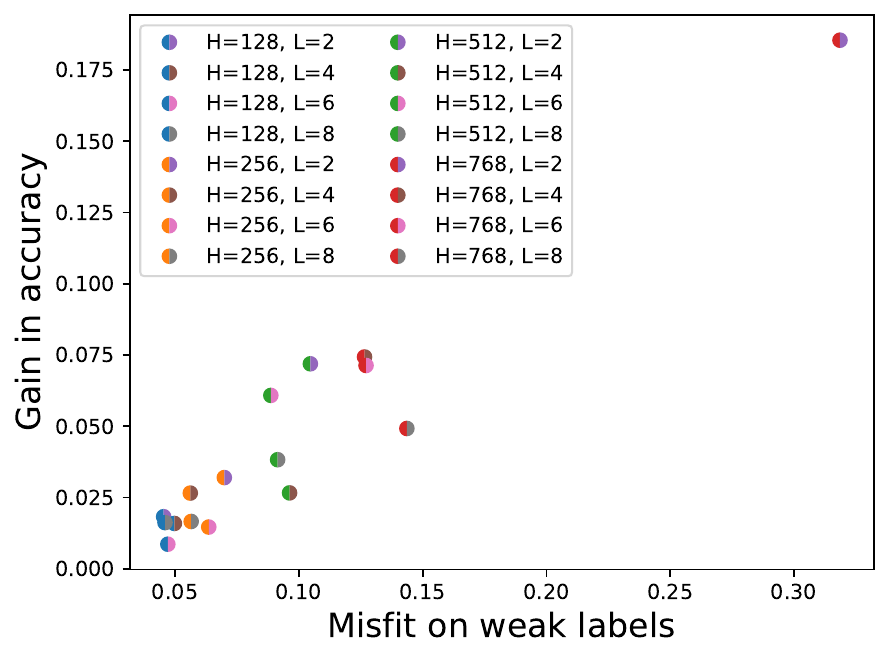}  
	\caption{cohesion}
	\label{fig:essay-scoring-cohesion}
    \end{subfigure}
    \begin{subfigure}{.33\textwidth}
	\centering
	\includegraphics[width=1\linewidth]{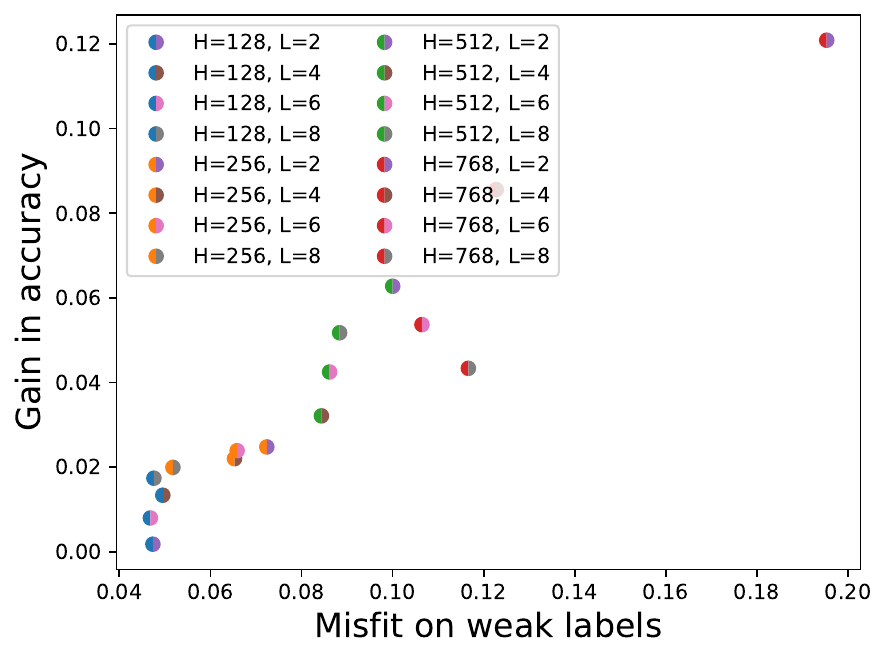}  
	\caption{syntax}
	\label{fig:essay-scoring-syntax}
    \end{subfigure}
    \begin{subfigure}{.33\textwidth}
	\centering
	\includegraphics[width=1\linewidth]{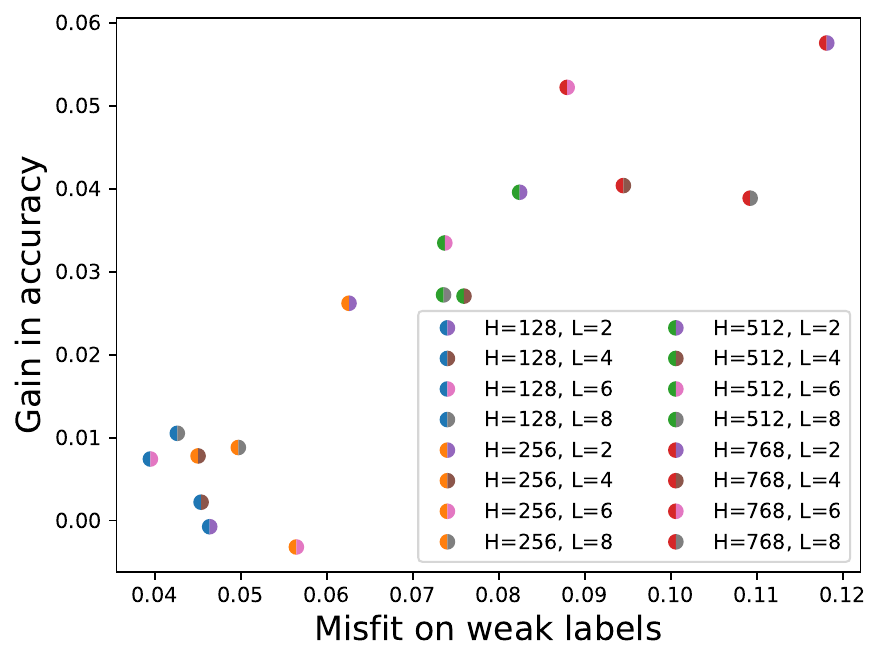}  
	\caption{vocabulary}
	\label{fig:essay-scoring-vocabulary}
    \end{subfigure}
    \newline
    \begin{subfigure}{.33\textwidth}
	\centering
	\includegraphics[width=1\linewidth]{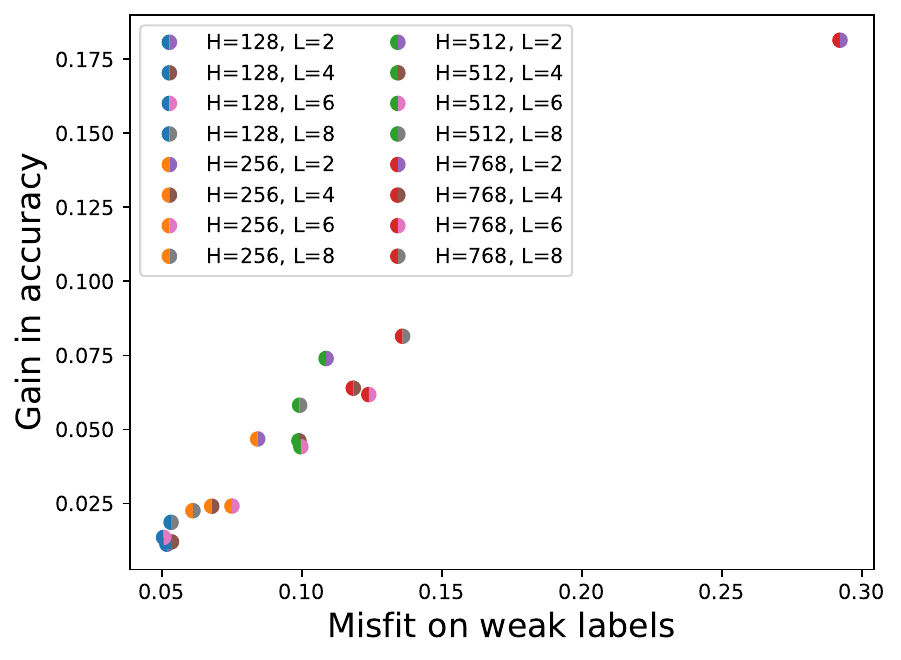}  
	\caption{phraseology}
	\label{fig:essay-scoring-phraseology}
    \end{subfigure}
    \begin{subfigure}{.33\textwidth}
	\centering
	\includegraphics[width=1\linewidth]{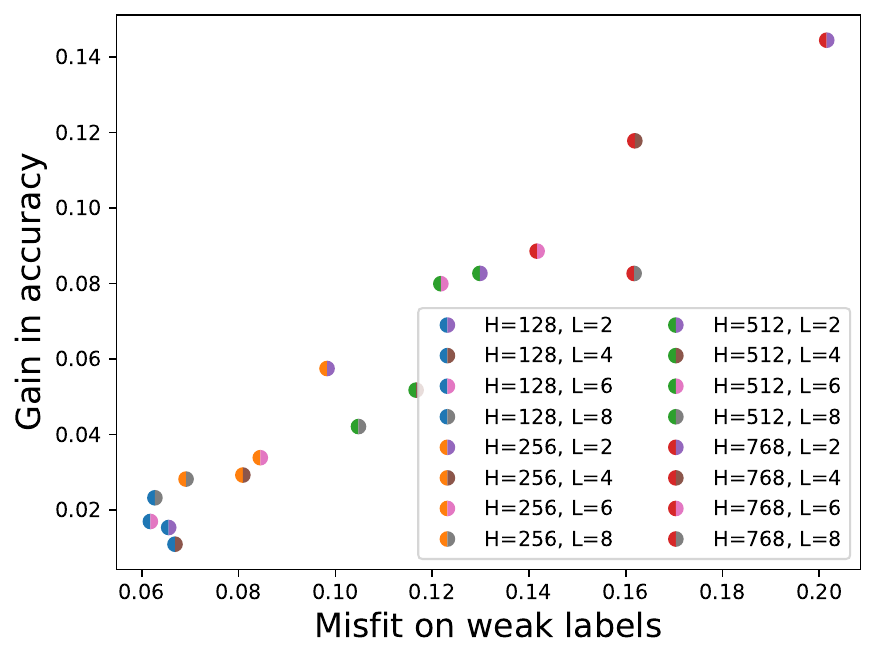}  
	\caption{grammar}
	\label{fig:essay-scoring-grammar}
    \end{subfigure}
    \begin{subfigure}{.33\textwidth}
	\centering
	\includegraphics[width=1\linewidth]{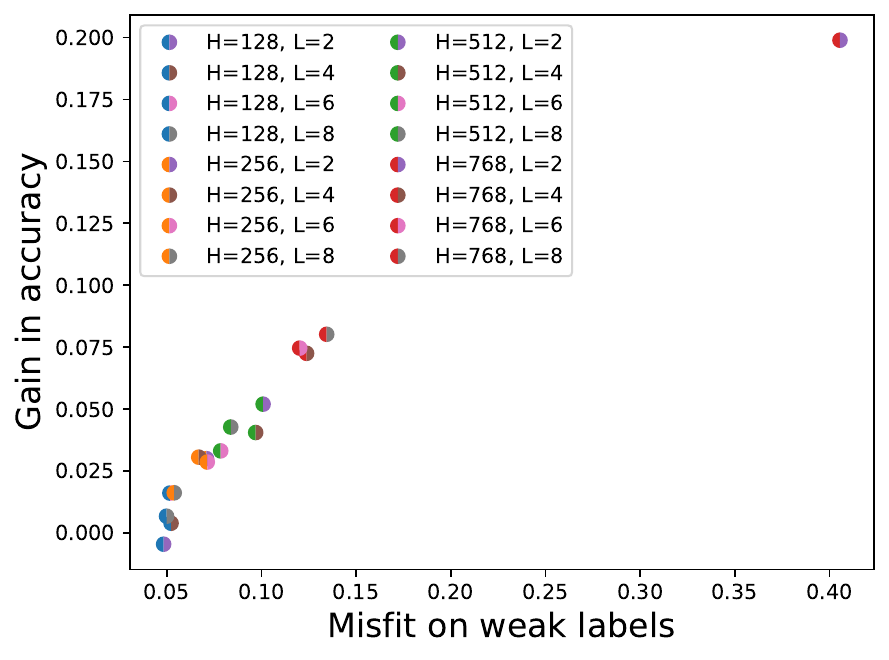}  
	\caption{conventions}
	\label{fig:essay-scoring-conventions}
    \end{subfigure}
    \caption{Results on the Essay Scoring dataset. Each plot corresponds to the task of predicting the score based on a different rubric.}
    \label{fig:essay-scoring}
\end{figure}

\subsection{French Public Service Reviews}
\label{sec:french-reviews}

We also perform experiments on a dataset of public service reviews in French language collected from Google Maps and Trustpilot which accompanies the blog post of \cite{javaness} available at \url{https://lajavaness.medium.com/regression-with-text-input-using-bert-and-transformers-71c155034b13}. The task is to assign a score in 0-4 to the reviews, indicating whether it is negative or positive. The data source provides train and test splits.

We use pretrained models from the CamemBERT \cite{martin2020camembert} and FlauBERT \cite{le2020flaubert} family for this experiment. We fix $h_s$ to be camembert-large (335M parameters), and range $h_w$ amongst camembert-base, camembert-base-ccnet, camembert-base-wikipedia-4gb, camembert-base-oscar-4gb, camembert-base-ccnet-4gb, flaubert-small-cased, flaubert-base-cased and flaubert-base-uncased. The results can be seen in \Cref{fig:french-reviews}.

\begin{figure}[H]
    \centering
    \includegraphics[scale=0.33]{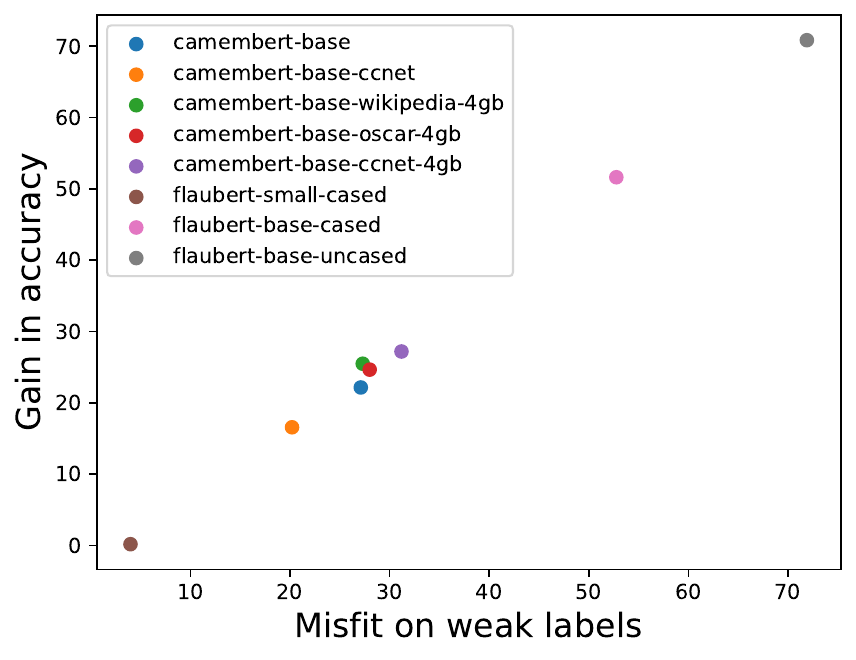}
    \caption{Results on the French Reviews dataset.}
    \label{fig:french-reviews}
\end{figure}

\section{Non-convex \texorpdfstring{$\cF$}{cF}}
\label{sec:non-convex-cF}

Recall that in our Theorems \ref{thm:realizable} and \ref{thm:non-realizable-finite-samples}, we crucially require the set $\cF$ of finetuning tasks from which the strong model learns to be a convex set. Even in our experiments (\Cref{sec:experiments}), the set $\cF$ is simply the set of linear functions (which is a convex set).

In this section, we instead consider the scenario where the set $\cF$ is not convex. Concretely, consider the setting of the synthetic experiment in \Cref{fig:non-realizable-weak-to-strong} from \Cref{sec:synthetic-expts}, where we obtain the strong and weak representations via pretraining, the weak model is a 2-layer network and the strong model is an 8-layer network. Recall that the set $\cF$ from which the strong and weak models learnt was the set of linear functions mapping $\R^{16} \to \R$ (this was also the ground-truth set from which the tasks were being generated from). Here instead, we consider the scenario where there is an additional non-linear activation $\phi$ following the linear map. Specifically, we consider the set $\cF$ to be
\begin{align*}
    \cF = \{g: g = \phi \circ f, f \text{ is linear}\}.
\end{align*}
The non-linear activation renders the set $\cF$ to no longer be a convex set, and our theorems do not directly apply in this case. Thus, if we were to repeat the whole weak-to-strong generalization experiment for this $\cF$, we have a priori no reason to expect the characterization of the gain in terms of the misfit. However, we observe that this characterization continues to hold for the three (popular) choices of $\phi$ that we tried, namely tanh, relu and sigmoid. The plots are shown in \Cref{fig:synthetic-non-convex}. As we can see, even when the set $\cF$ is clearly not convex, the gain is well-characterized by the misfit, suggesting that the result from \Cref{thm:realizable} may hold even under more general settings.

    \begin{figure}[H]
	\begin{subfigure}{.33\textwidth}
	\centering
	\includegraphics[width=1\linewidth]{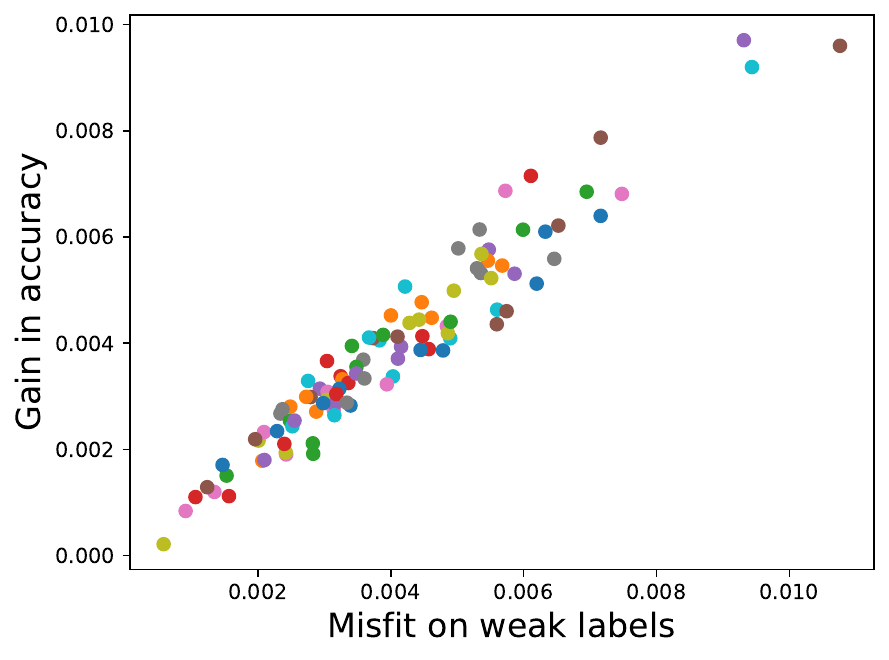}  
	\caption{$\phi=\text{sigmoid}$}
	\label{fig:sigmoid}
\end{subfigure}
	\begin{subfigure}{.33\textwidth}
	\centering
	\includegraphics[width=1\linewidth]{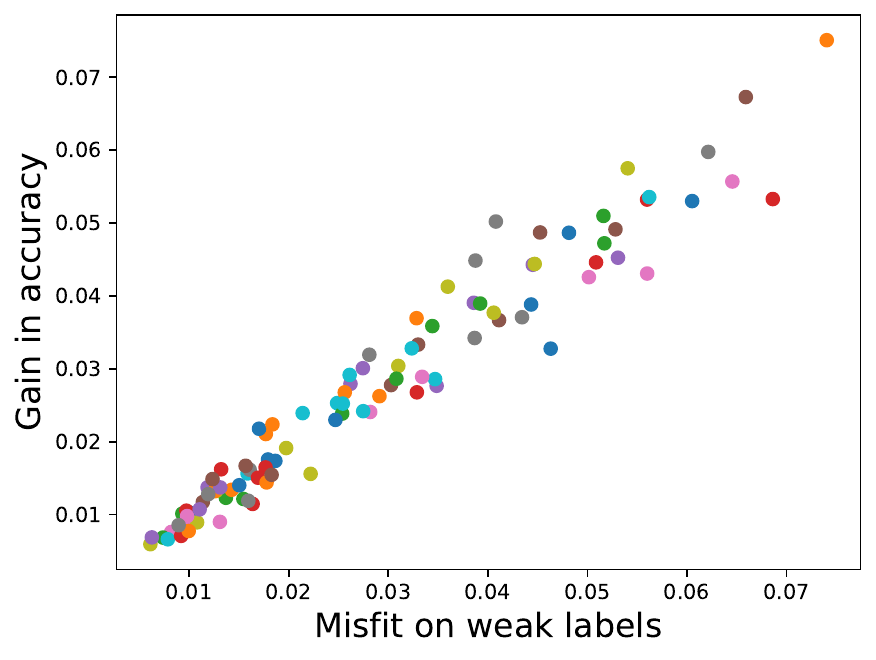}  
	\caption{$\phi=\text{tanh}$}
	\label{fig:tanh}
\end{subfigure}
	\begin{subfigure}{.33\textwidth}
	\centering
	\includegraphics[width=1\linewidth]{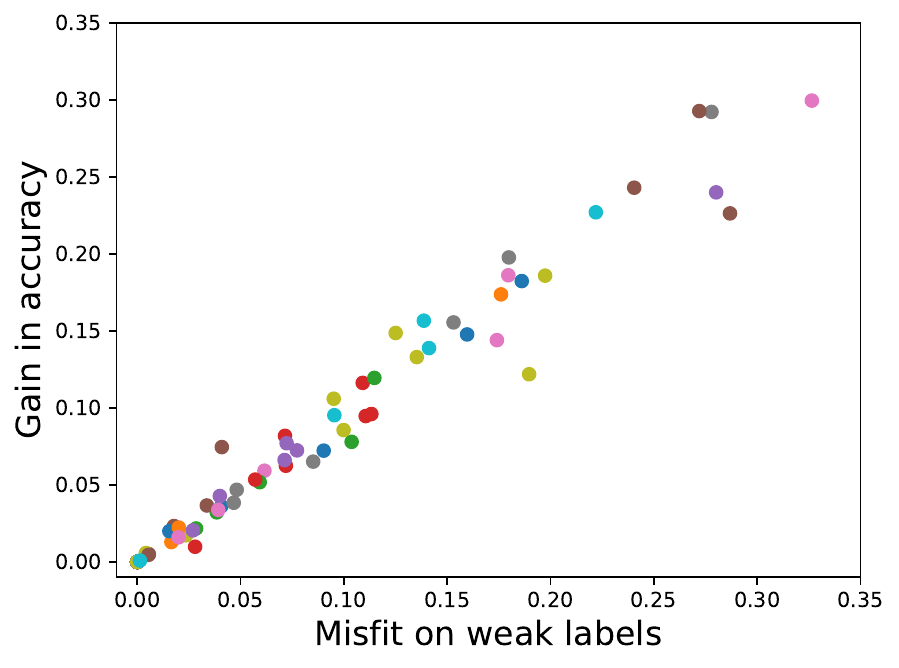}  
	\caption{$\phi=\text{relu}$}
	\label{fig:relu}
\end{subfigure}
	\caption{Weak-to-strong generalization when $\cF$ is a non-convex set.}
	\label{fig:synthetic-non-convex}
\end{figure}

\section{Discussion Regarding Extensions to Classification}
\label{sec:discussion-classification}

While our weak-to-strong generalization theory is tailored to regression, we do believe that the qualitative conclusion of our work should also extend to other settings, including classification. In particular, as evidenced by numerous plots on student-supervisor disagreement in \cite{burns2023weak}, the performance gain in classification ought also be (mathematically) characterized, at least to some extent, by the misfit between the weak and strong model; however, additional regularization terms in the objective (along with other error terms) seem to be necessary. This is supported by Figure 8 in \cite{burns2023weak}, which shows that adding auxiliary losses in the objective helps reduce student-supervisor agreement as compared to naively finetuning (without any auxiliary loss) to the weak labels, and thereby improves the perfomance of the strong student model.

Of particular relevance here is also the study in Appendices E.1, E.2 in \cite{burns2023weak}, which studies the \textit{qualitative} structure of weak-to-strong disagreement in the setting of classification. Namely, the question studied there is: do different forms of disagreement between the strong and weak model (e.g., random errors vs correlated errors), for the same weak model accuracy, lead to differing weak-to-strong generalization? While this nuance arises in the setting of classification, it may be worth noting that in the setting of regression that we consider, projection onto the convex set only results in movement \textit{towards} the true function, assuming realizability---in this sense, any misfit is  only ever ``in the correct direction'', and its full quantitative benefit is realized. Thus, such a qualitative difference among different types of misfits does not manifest for us. Nevertheless, while it is true that in other settings, the nature of disagreement might matter, the general principle of decreasing student-supervisor imitation (alternatively, increasing misfit) to foster weak-to-strong generalization, either via auxiliary losses or otherwise, does constitute a significant theme in the work of \cite{burns2023weak}.

\section{Implementation Details}
\label{sec:implementation-details}

All our synthetic experiments were run on a personal MacBook Pro 2021 with an Apple M1 Pro Chip (10 CPU cores) and no GPUs. All the gradient descent optimization procedures (pretraining tasks to obtain $h_w, h_s$, weak model finetuning, strong model finetuning on weak labels) used the Adam optimizer \cite{kingma2014adam}, with a batch size of 32, learning rate of $10^{-3}$ and 1000 epochs. We used the same random seed in all our synthetic experiments. All experiments completed execution within 2 hours.

The experiments on MolBERT used 2 GPUs with 8 GB memory on an internal GPU cluster. All experiments completed execution within 6 hours. We reuse much of the pretraining as well as finetuning codebase from the MolBERT repository (\url{https://github.com/BenevolentAI/MolBERT/tree/main}). The pretraining GuacaMol dataset as well as weights for a full-sized BERT model pretrained on this dataset can be downloaded from the repository. The MolBERT repository is under the MIT license, and the ChemBench repository is under the Python Packaging Authority license.

\end{document}